\def\eqref#1{(\ref{#1})}
\def\1{\bm{1}}
\def\rveta{{\boldsymbol{\eta}}}
\def\vone{{\bm{1}}}
\def\vmu{{\bm{\mu}}}
\def\vtheta{{\bm{\theta}}}
\def\vu{{\bm{u}}}
\def\vv{{\bm{v}}}
\def\vx{{\bm{x}}}
\def\mA{{\bm{A}}}
\def\mB{{\bm{B}}}
\def\mC{{\bm{C}}}
\def\mD{{\bm{D}}}
\def\mE{{\bm{E}}}
\def\mF{{\bm{F}}}
\def\mI{{\bm{I}}}
\def\mL{{\bm{L}}}
\def\mQ{{\bm{Q}}}
\def\mR{{\bm{R}}}
\def\mU{{\bm{U}}}
\def\mV{{\bm{V}}}
\def\mW{{\bm{W}}}
\def\mX{{\bm{X}}}
\def\mY{{\bm{Y}}}
\def\mZ{{\bm{Z}}}
\DeclareMathAlphabet{\mathsfit}{\encodingdefault}{\sfdefault}{m}{sl}
\SetMathAlphabet{\mathsfit}{bold}{\encodingdefault}{\sfdefault}{bx}{n}
\def\tL{{\tens{L}}}
\def\gD{{\mathcal{D}}}
\def\gG{{\mathcal{G}}}
\def\gL{{\mathcal{L}}}
\def\gT{{\mathcal{T}}}
\def\gW{{\mathcal{W}}}
\def\sB{{\mathbb{B}}}
\def\sL{{\mathbb{L}}}
\def\sV{{\mathbb{V}}}
\def\vnu{{\bm{\nu}}}
\def\tL{{\widetilde{\bm{L}_0}}}
\newcommand{\R}{\mathbb{R}}
\newtheorem{remark}{Remark}
\newtheorem{theorem}{Theorem}
\newtheorem{proof}{Proof}
\newcounter{bxincomm}
\definecolor{aqua}{rgb}{0.00,0.67,0.80}
\newcounter{gaocomm}
\definecolor{blue-violet}{rgb}{0.54, 0.17, 0.89}
\newcounter{ygcounter}
\newcommand{\ygc}[1]{\ygc{\stepcounter{ygcounter}{\bf [YG's comment \arabic{ygcounter}: #1]}\;}}
\definecolor{xuebin}{rgb}{0.7, 0.4, 1.0}
\newcounter{rkuncomm}
\definecolor{rkcolor}{rgb}{0.09,0.45,0.27}
\begin{document}
\title{Graph Denoising with Framelet Regularizer}
%
%
%
%

\author{Bingxin~Zhou,
        Ruikun~Li, 
        Xuebin~Zheng, 
        Yu~Guang~Wang,~\IEEEmembership{Member,~IEEE,}
        and~Junbin~Gao
\IEEEcompsocitemizethanks{\IEEEcompsocthanksitem B.Zhou is with The University of Sydney Business School, The University of Sydney, NSW, Australia and Institute of Natural Sciences, Shanghai Jiao Tong University, Shanghai, China. \protect\\ 
email: bzho3923@uni.sydney.edu.au
\IEEEcompsocthanksitem R.Li, X.Zheng and J.Gao are with The University of Sydney Business School, The University of Sydney, NSW, Australia.
\IEEEcompsocthanksitem Y.G.Wang is with Institute of Natural Sciences and School of Mathematical Sciences, Key Laboratory of Scientific and Engineering Computing of Ministry of Education, and Center for Mathematics of Artificial Intelligence Institute, Shanghai Jiao Tong University, Shanghai, China; Shanghai Artificial Intelligence Laboratory, Shanghai, China; and School of Mathematics and Statistics, The University of New South Wales, Australia.
}
\thanks{Manuscript received October 30, 2021.}}

\markboth{Journal of \LaTeX\ Class Files,~Vol.~14, No.~8, October~2021}%
{Zhou \MakeLowercase{\textit{et al.}}: Bare Demo of IEEEtran.cls for Computer Society Journals}
%



\IEEEtitleabstractindextext{%
\begin{abstract}
As graph data collected from the real world is merely noise-free, a practical representation of graphs should be robust to noise. Existing research usually focuses on feature smoothing but leaves the geometric structure untouched. Furthermore, most work takes $\mathbb{L}_2$-norm that pursues a global smoothness, which limits the expressivity of graph neural networks. This paper tailors regularizers for graph data in terms of both feature and structure noises, where the objective function is efficiently solved with the alternating direction method of multipliers (ADMM). The proposed scheme allows to take multiple layers without the concern of over-smoothing, and it guarantees convergence to the optimal solutions. Empirical study proves that our model achieves significantly better performance compared with popular graph convolutions even when the graph is heavily contaminated.
\end{abstract}

\begin{IEEEkeywords}
Graph Convolutional Networks, Constrained Optimization, Framelet Transforms, Graph Denoising.
\end{IEEEkeywords}}

\maketitle

\IEEEdisplaynontitleabstractindextext

%
\IEEEpeerreviewmaketitle

\IEEEraisesectionheading{\section{Introduction}}
\label{sec:introduction}
\IEEEPARstart{D}{ata} quality is of crucial importance for a reliable modeling. When non-sampling errors become too large to neglect, the validity of the subsequent inference for the estimation result remains of limited confidence. Such concern has been explored throughout for grid data and panel data \cite{westerlund2007testing,to2009wavelet,goyal2020image}. As a special type of data with geometric structure and feature, graph data suffers from the same concern. For instance, when diagnose COVID-19, panicky suspect patients (nodes) might overstate their symptoms (feature), while some patients choose to undercover their track (edges). In the early stage of the outbreak, there could be a systematic error or unstandardized procedures that falsely classify 'low/medium/high risky regions', in which case the edge connection becomes less informative.

Graph Neural Networks (GNNs) have gained a predominant position in graph representation learning due to its promising empirical performance on various applications of machine learning. Recent studies in \cite{jin2020graph,zhu2021interpreting} showed that many graph convolution layers, such as \textsc{GCN} \cite{Kipf2017semi} and \textsc{GAT} \cite{velivckovic2017graph}, smooth the graph signal by intrinsically performing a $\sL_2$-based graph Laplacian denoising procedure. While the neighborhood-based local smoothness is desired for graph communities, such methods usually enforce global smoothness, which makes inter-cluster vertices indistinguishable, and the learned graph representation suffers from the well-known issue of \emph{over-smoothing}. Consequently, the model performance drops drastically before it learns the complicated relationship of graph geometry. In order to solve this problem, most researchers either consider spectral methods \cite{xu2018graph,zheng2021framelets} to include high-pass information, or introduce bias \cite{klicpera2018predict,xu2018representation} that balances between the fitness and smoothness of the models. 

While the graph feature smoothing has caught increasing attention, literature on structure denoising is far from sufficient. Researchers usually focus on the situation of defending graph adversarial attacks, however, such a scenario assumes a minor manipulation on the graph by hostile attackers, which does not universally exist in any case and does not threats all entities of a graph. On the other hand, a fundamental assumption on any GNNs, especially those popular message-passing networks, is that relevant vertices (or neighbors) share similar properties. When the precise connection is contaminated or indeterminant, the model estimation becomes unreliable. 

The lack of exploration on graph smoothing and denoising naturally brings our first research question: \textbf{\textit{How to measure the smoothness of both feature signal and structure space in a noisy graph?}} Instead of over-simplify or even neglect the graph geometry, the design should be tailor-made for a graph that reflects its key properties such as connectivity. The measure should also be aware of pitfalls that exist in training GNNs, such as over-smoothing. In consideration of the practicability, it's crucial to ask: \textbf{\textit{How to design a robust mechanism that is adaptive to any graph convolution layers?}} In this way, our method can be sliced to other graph convolutions to enhance their performance. 

To answer the questions, we explore regularizers on graphs and propose a framelet-based \textbf{\underline{Do}}uble-\textbf{\underline{T}}erm smoother, namely \textsc{DoT}, that detects and erases the noises in graph representation. We keep the key properties of graphs by applying the graph-norm on measuring the noise level. The punishment on each node is weighted by node degree, so that highly active nodes (with more neighbors) tolerant a smaller risk to be polluted. For feature noises, we replace the predominant $\sL_2$ norm with $\sL_1$ norm so that the sparsity in the transformed domain is desired while the smoothness is not overly pursued. The geometry noise, on the other hand, is measured by $\sL_{2,1}$ norm where we concentrate errors on merely connected nodes so that they are less likely to provoke un-neglectable estimation errors. We validate our model both theoretically and empirically. We detail the construction of our optimization target as well as its update rules. Its convergence analysis is attached to support its performance. We also compare three ablation models with popular convolution methods and several tasks, where our proposed method outperforms significantly. 

The rest of the paper is arranged as follows. Section~\ref{sec:relatedWork} discusses relevant research in literature.  Section~\ref{sec:l2_smooth} introduces popular graph convolution designs from the view of signal smoothing then Section~\ref{sec:denoising_preliminary} presents graph-related regularizers. Section~\ref{sec:ADMMdenoising} formulate the objective function for framelet-based \textsc{DoT} regularizer. The update rule and convergence analysis are provided under the protocol of the ADMM algorithm, following which three ablation models are listed. In Section~\ref{sec:exp}, several real-world examples indicate the model's recovery ability on different levels and types of contamination. We further conclude the work in Section~\ref{sec:conclusion}.

\begin{figure}
\centering
\newcommand\initialy{4}
\newcommand\nodeSize{0.75cm}

\tikzset{%
  tipSquare/.tip={Circle[open]}
}

\tikzset{%
  every neuron/.style={
    circle,
    draw,
    fill=white,
    scale=0.8,
    minimum size=\nodeSize
  },
  neuron missing/.style={
    draw=none, 
    scale=0.8,
    fill=$\dots$,
    text height=0cm,
    execute at begin node=$\dots$
  },
  snake it/.style={
    decorate, decoration=snake
  }
}

\begin{tikzpicture}[x=1.5cm, y=1.5cm, >=stealth, scale=0.58, every node/.style={transform shape}, curved arrow/.style={arc arrow={to pos #1 with length 2mm and options {}}},
reversed curved arrow/.style={arc arrow={to pos #1 with length 2mm and options reversed}}]  

\def\ystart{-2.4}
\def\xstart{-5.35}
\draw[rounded corners, densely dotted, fill=blue!1] (\xstart,\ystart) -- (\xstart+4.4,\ystart) -- (\xstart+4.4, \ystart-1.8) -- (\xstart, \ystart-1.8) -- cycle;

\draw[rounded corners, densely dotted, fill=blue!1] (\xstart-0.1,\ystart-0.1) -- (\xstart+4.3,\ystart-0.1) -- (\xstart+4.3, \ystart-1.9) -- (\xstart-0.1, \ystart-1.9) -- cycle;

\draw[rounded corners, densely dotted, fill=blue!1] (\xstart-0.2,\ystart-0.2) -- (\xstart+4.2,\ystart-0.2) -- (\xstart+4.2, \ystart-2.0) -- (\xstart-0.2, \ystart-2.0) -- cycle;

\def\ystart{-4.3}
\def\xstart{-7.85}
\draw[fill=purple!30] (\xstart,\ystart) -- (\xstart+1.7,\ystart) -- (\xstart+1.7, \ystart+0.7) -- (\xstart, \ystart+0.7) -- cycle;

\def\xstart{-4.0}
\draw[rounded corners, dashed, fill=blue!30] (\xstart,\ystart) -- (\xstart+1.4,\ystart) -- (\xstart+1.4, \ystart+0.6) -- (\xstart, \ystart+0.6) -- cycle;

\def\ystart{-3.3}
\def\xstart{-5.5}
\draw[rounded corners, dashed, fill=orange!30] (\xstart,\ystart) -- (\xstart+1.3,\ystart) -- (\xstart+1.3, \ystart+0.6) -- (\xstart, \ystart+0.6) -- cycle;
\draw[rounded corners, dashed, fill=orange!30] (\xstart+3,\ystart) -- (\xstart+4.3,\ystart) -- (\xstart+4.3, \ystart+0.6) -- (\xstart+3, \ystart+0.6) -- cycle;

\def\ystart{-5.45}
\draw[fill=black!50!green!20] (\xstart,\ystart) -- (\xstart+1.3,\ystart) -- (\xstart+1.3, \ystart+0.6) -- (\xstart, \ystart+0.6) -- cycle;
\draw[fill=black!50!green!20] (\xstart+3,\ystart) -- (\xstart+4.3,\ystart) -- (\xstart+4.3, \ystart+0.6) -- (\xstart+3, \ystart+0.6) -- cycle;

\def\ystart{-6.7}
\def\xstart{-4.05}
\draw[rounded corners=0.5mm, densely dotted, fill=blue!1] (\xstart,\ystart) -- (\xstart+1.5,\ystart) -- (\xstart+1.5, \ystart+0.6) -- (\xstart, \ystart+0.6) -- cycle;

\def\xstart{0.0}
\draw[fill=black!50!green!20] (\xstart,\ystart) -- (\xstart+1.0,\ystart) -- (\xstart+1.0, \ystart+0.6) -- (\xstart, \ystart+0.6) -- cycle;

\draw [->] (-8.3,-4.0) -- ++(0.4,0);
\draw [->] (-6.0,-4.0) -- ++(2.0,0);
\draw[rounded corners] (-7.0, -3.5) -- (-7.0, -1.7) -- (-3.4, -1.7) -- (-3.4, -1.9);
\draw [->] [rounded corners] (-3.5, -2.2) -- (-4.8, -2.2) -- (-4.8, -2.6);
\draw [->] [rounded corners] (-3.2, -2.2) -- (-1.8, -2.2) -- (-1.8, -2.6);
\draw[->] (-4.8,-3.4) -- (-4.8,-3.92) arc (90:270:0.1cm) -- (-4.8,-4.8);
\draw[->] (-1.8,-3.4) -- (-1.8,-4.8);

\draw (-4.1,-5.2) -- (-2.6,-5.2);
\draw[->] (-3.3,-5.4) --++ (0,-0.7);
\draw[->] (-2.4,-6.4) --++ (2.2,0);

\node[every neuron] at (-3.4, -2.1){$\blacktriangledown$};
\node[align=center, above] at (-7.0, -4.3) {latent\\representation};
\node[align=center, above] at (-4.8, -3.15) {feature};
\node[align=center, above] at (-1.8, -3.15) {structure};
\node[align=center, above] at (-3.3, -4.2) {fidelity};
\node[align=center, above] at (-4.85, -5.3) {cl.feature};
\node[every neuron] at (-3.3, -5.2){$\Join$};
\node[align=center, above] at (-1.85, -5.3) {cl.structure};
\node[align=center, above] at (-3.3, -6.6) {Predictor};
\node[align=center, above] at (0.5, -6.6) {label};

\end{tikzpicture}
\caption{Conceptual architecture of the proposed denoising scheme. For an input graph representation (either raw or latent), \textsc{DoT} slices its feature and structure information to do recursive cleaning. The fidelity function is extracted from the input noise signal to control the power of the cleaning scheme. The purified representation is then propagated to the next layer for label prediction (or other operations).}
\label{fig:architecture}
\end{figure}

\section{Related Work}
\label{sec:relatedWork}
\subsection{Graph Neural Networks for representation learning}
GNNs have shown great success in dealing with irregular graph-structured data that traditional deep learning methods such as CNNs fail to manage \cite{bronstein2017geometric,zhou2020graph,zhang2020deep,wu2020comprehensive}. The main factor that contributes to their success is that GNNs are able to learn the structure pattern of the graph while CNNs can only handle regular grid-like structures. 

A broad spectrum of GNN approaches has been proposed for defining the graph convolutional operation, where the two main groups are spatial-based and spectral-based methods. Spatial-based methods is dominantly researched and practiced due to its intuitive characteristics \cite{Kipf2017semi,velivckovic2017graph,xu2018powerful,xu2018representation,jiang2021co}. As an extension of convolutional neural networks (CNNs), spatial-based linear GNNs pass messages along the graph Laplacian, which naturally performs local smoothness on second-order graph differences \cite{zhu2021interpreting,chen2021graph} and can be interpreted as low-pass filters of graph signals \cite{nt2019revisiting}. Spectral methods that convert the raw signal or features in the vertex domain into the frequency domain, on the other hand, was paid less attention to \cite{bruna2013spectral,henaff2015deep,defferrard2016convolutional,levie2018cayleynets,xu2018graph,zheng2021framelets}. In fact, spectral-based methods have already been proved to have a solid mathematical foundation in graph signal processing \cite{shuman2013emerging}. Versatile Fourier \cite{defferrard2016convolutional,Kipf2017semi,henaff2015deep}, wavelet transforms\cite{xu2018graph,zheng2020mathnet} and framelets\cite{zheng2021framelets} have also shown their capabilities in graph representation learning. In addition, with fast transforms being available in computing strategy, a big concern related to efficiency could be well resolved.

\subsection{Graph Signal Smoothing and Denoising}
Common to both categories of graph convolutions is the signal smoothing of graph filters. Spatial methods usually implement polynomial \cite{Kipf2017semi,tremblay2018design}, Lanczos \cite{liao2018lanczosnet} or ARMA \cite{isufi2016autoregressive,thomas2021higher} filters. From the perspective of graph signal denoising, an aggregation layer approximates smooth graph signals with pairwise quadratic local variation minimization \cite{zhou2004learning,fu2020understanding}. While linear approximation achieves fast optimization over graph denoising, the abandoned details at each layer limit the model's fitting power over complex nonlinear functions, which contradicts the requirement of an effective deep neural network. In contrast, stacking multiple network layers aggregate complicated relationships of a large receptive field, but taking a weighted moving average over too many neighbors could lose sharp changes of frequency response. 

Instead, spectral methods, due to their close connection to signal processing, possess detail-preserved global smoothness. Conventional spectral transforms has been widely used for processing non-stationary signals, such as images compression and restoration \cite{vaseghi2008advanced,parrilli2011nonlocal}, speech enhancement \cite{bahoura2006wavelet,loizou2007speech} and so on. Compared to the Fourier transform of the time-frequency domain, wavelets transforms engage between the time and scale domain of signals, which facilitates a multi-scale or multi-resolution view of the input. The signal coefficients exhibit sparse distribution while noise coefficients spread uniformly with a small amplitude. This property allows differentiate signal and noise with threshold-based approaches \cite{chang2000adaptive,aminghafari2006multivariate} or empirical risk minimization approaches \cite{ramani2008monte,ding2015artifact}.

The same philosophy of sparsifying signal representation is implanted to smoothing spectral graph convolutions. Compared to spatial-based methods where the sparsity is usually restricted in vertex domain \cite{liu2021elastic}, spectral methods measure the graph sparsity in transformed domain, e.g., in Fourier \cite{chen2021bigcn} or Framelet domain \cite{dong2017sparse}. Usually $\sL_1$ regularization is suggested as the sparsity measurement. 

ADMM \cite{gabay1976dual} or Split-Bregman \cite{goldstein2009split} are conventional solutions of recursive optimization. While the authors of \cite{xie2019differentiable,li2020training} show that the the feed-forward propagation of neural networks can be interpreted as optimization iterations, solving the optimal graph representation over diverse prior design becomes possible, such as sparsity over input signals \cite{liu2021elastic} or first-order difference of input \cite{wang2015trend}.

\section[L2 smoothing]{Graphs Convolutions and $\sL_2$ Smoothers}
\label{sec:l2_smooth}
This section discusses the equivalent expression of graph convolution from the perspective of graph signal denoising. We consider an undirected graph $\gG=(\sV,\mathbb{E},\mX)$ with $N=|\sV|$ nodes. The edge connection is described by an adjacency matrix $\mA\in \R^{N\times N}$ and the $d$-dimensional node feature is stored in $\mX\in\R^{N\times d}$. Here we neglect the nonlinear activation function $\sigma(\cdot)$ for simplicity.

\subsection{Spatial Graph Convolution}
A typical way of denoising graph signal is via graph smoothing, where the feature smoothness can be forced by a modified \emph{Dirichlet energy function}. While the smoothed representation should be close to the input representation, one leverages a \emph{fidelity term} to guarantee this similarity, and this similarity is usually measured by a convex distance measure, such as the Euclidean distance ($\sL_2$ norm).

\begin{theorem}
    Consider a graph smoothing problem on the input representation $\mX\mW$
    \begin{equation} \label{func:spatial_smoothing}
        \min_{\mU} {\rm tr}(\mU^{\top}\tilde{\mL}\mU)+\|\mU-\mX\mW\|^2_{2},
    \end{equation}
    where the smoothness of graph is measured by the Dirichlet energy that ${\rm tr}(\mU^{\top}\tilde{\mL}\mU)=\frac12\sum_{i,j=1}^N \mA_{ij}(\frac{\mX_i}{\sqrt{\mD_{ii}}}-\frac{\mX_j}{\sqrt{\mD_{jj}}})^2$. Here the $\tilde{\mL}:=\mI-\mD^{-\frac12}\mL\mD^{-\frac12}=\mI-\tilde{\mA}$ denotes the normalized graph Laplacian with respect to the adjacency matrix $\mA$ or the degree matrix of the graph $\mD$. The first-order approximation to this problem's optimal solution is $\mU:=\tilde{\mA}\mX\mW$.
\end{theorem}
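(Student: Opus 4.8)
The plan is to treat \eqref{func:spatial_smoothing} as an unconstrained strictly convex quadratic program in $\mU$, solve it exactly via a first-order stationarity condition, and then obtain the claimed rule as the first-order truncation of the matrix inverse appearing in the closed-form minimizer.

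First I would differentiate. Since $\tilde{\mL}$ is symmetric, $\nabla_{\mU}\,{\rm tr}(\mU^{\top}\tilde{\mL}\mU)=2\tilde{\mL}\mU$, while $\nabla_{\mU}\|\mU-\mX\mW\|_2^2=2(\mU-\mX\mW)$ for the Frobenius norm. Setting the sum to zero yields the normal equation $(\mI+\tilde{\mL})\,\mU=\mX\mW$. As $\tilde{\mL}$ is positive semidefinite (it is the normalized Laplacian, and the Dirichlet-energy identity already shows ${\rm tr}(\mU^{\top}\tilde{\mL}\mU)\ge 0$), the matrix $\mI+\tilde{\mL}$ is positive definite and hence invertible, the objective is strictly convex, and its unique global minimizer is $\mU^{\star}=(\mI+\tilde{\mL})^{-1}\mX\mW$.

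Next I would pass to the first-order approximation. Truncating the Neumann expansion $(\mI+\tilde{\mL})^{-1}=\sum_{k\ge 0}(-\tilde{\mL})^{k}$ after the linear term gives $(\mI+\tilde{\mL})^{-1}\approx\mI-\tilde{\mL}$, and substituting $\tilde{\mL}=\mI-\tilde{\mA}$ \emph{after} this truncation produces $\mU\approx(\mI-\tilde{\mL})\mX\mW=\tilde{\mA}\mX\mW$, the GCN propagation rule. The same expression follows from an algorithmic reading of ``first order'': one gradient-descent step on \eqref{func:spatial_smoothing} from the warm start $\mU_{0}=\mX\mW$ with step size $\tfrac12$ gives $\mU_{1}=\mU_{0}-\tfrac12\nabla_{\mU}f(\mU_{0})=\mX\mW-\tilde{\mL}\mX\mW=\tilde{\mA}\mX\mW$, because the fidelity gradient vanishes at that warm start.

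The algebra here is routine; the only genuinely delicate point is the status of the first-order truncation, since the eigenvalues of $\tilde{\mL}$ lie in $[0,2]$, so the Neumann series need not converge and $\mI-\tilde{\mL}$ is only a surrogate for $(\mI+\tilde{\mL})^{-1}$ in general. I would therefore state the result as a first-order approximation rather than an exact identity, and observe that the equivalent single-gradient-step derivation makes the step-size choice explicit and keeps the statement precise. One must also take care to expand in $\tilde{\mL}$ and substitute $\tilde{\mL}=\mI-\tilde{\mA}$ only afterward, because expanding $(2\mI-\tilde{\mA})^{-1}$ directly in $\tilde{\mA}$ yields a different leading term.
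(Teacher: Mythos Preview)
Your proposal is correct and follows essentially the same route as the paper: set the gradient to zero, obtain $\mU^\star=(\mI+\tilde{\mL})^{-1}\mX\mW$, and apply the first-order expansion $(\mI+\tilde{\mL})^{-1}\approx\mI-\tilde{\mL}=\tilde{\mA}$. Your additions (the convexity/invertibility justification, the gradient-step reinterpretation, and the caveat about the spectrum of $\tilde{\mL}$) are sound refinements of the same argument rather than a different approach.
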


\begin{proof}
    To obtain the optimal solution to the objective function, we calculate the partial derivatives
    \begin{align*}
        && \frac{\partial {\rm tr}(\mU^{\top}\tilde{\mL}\mU)+\|\mU-\mX\mW\|^2_{2}}{\partial \mU} = 0,& \\
        & \Rightarrow\;\; & 2\tilde{\mL}\mU+2(\mU-\mX\mW) = 0,& \\
        & \Rightarrow\;\; & (\mI+\tilde{\mL})\mU = \mX\mW,&  \\
        & \Rightarrow\;\; & \mU = (\mI+\tilde{\mL})^{-1} \mX\mW.& 
    \end{align*}
    The first-order Taylor expansion gives that $(\mI+\tilde{\mL})^{-1} \approx \mI-\tilde{\mL}$. By Definition, $\mL=\mD-\mA$ and $\tilde{\mL}=\mI-\tilde{\mA}$, which suggests the first-order approximation of the optimal result $\mU \approx \tilde{\mA}\mX\mW$.
\end{proof}

This approximation in graph representation learning is known as graph convolutional networks \cite{Kipf2017semi}, which is a popular message-passing scheme that averages one-hop neighborhood representation with normalized adjacency matrix $\tilde{\mA}$. The $\mW$ is a learnable weight matrix that embeds the high-dimensional raw input $\mX$ to a lower dimension. There are many other aggregation strategies by different spatial-based GNNs under the message-passing \cite{Gilmer_etal2017} framework, such as \textsc{GAT} \cite{velivckovic2017graph}, \textsc{APPNP} \cite{klicpera2018predict} and \textsc{SGC} \cite{wu2019simplifying}. Though, most of them implicitly achieves this optimization objective. Authors of \cite{nt2019revisiting} explored the equivalence of GCN's propagation process to low-pass filters that smooths the graph signal by brutally filtering out all the detailed information. Other recent research \cite{zhu2021interpreting,liu2021elastic} identified that many other message passing schemes, although designing different propagation rules, share the same construction logic of the objective function in terms of denoising graph signals under the smoothness and fitness constraints \cite{zhou2004learning}, which is tricky to balance.

\subsection{Spectral Graph Convolution}
Compared to spatial counterparts, \emph{Spectral graph convolution}s transform graph signals into frequency domain for further operation, i.e., graph smoothing. 

\begin{theorem}
    For a given transform $\boldsymbol{\Phi}$, a spectral-based graph convolution smooths the input signal $\mX$ by optimizing
    \begin{equation} \label{func:spectral_smoothing}
        \min_{\mU} {\rm tr}(\mC^{\top} {\rm diag}(\vtheta) \mC)+\|\mU-\mX\|^2_2,
    \end{equation}
    where $\mC:=\boldsymbol{\Phi}\mU$ denotes the transformed coefficients with respect to the clean signal $\mU$ in frequency domain. The optimal solution to \eqref{func:spectral_smoothing} is $\vtheta\star \mX = \boldsymbol{\Phi}^{-1}{\rm diag}(\vtheta) \boldsymbol{\Phi} \mX$.
\end{theorem}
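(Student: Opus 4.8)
The plan is to follow the same recipe as the proof of the spatial smoothing theorem: the objective in \eqref{func:spectral_smoothing} is a strictly convex quadratic in $\mU$, so its unique minimizer is the stationary point, and that stationary point can be written in closed form and then recognized as a spectral filtering of $\mX$.

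First I would substitute $\mC=\boldsymbol{\Phi}\mU$ so that the objective becomes ${\rm tr}\bigl(\mU^{\top}\boldsymbol{\Phi}^{\top}{\rm diag}(\vtheta)\boldsymbol{\Phi}\mU\bigr)+\|\mU-\mX\|^2_2$. Since the spectral gains are nonnegative, $\boldsymbol{\Phi}^{\top}{\rm diag}(\vtheta)\boldsymbol{\Phi}\succeq\mathbf{0}$, and the $\|\mU-\mX\|^2_2$ term makes the quadratic strictly convex, so vanishing of the gradient characterizes the global optimum. Differentiating (using $\nabla_{\mU}\,{\rm tr}(\mU^{\top}\mathbf{A}\mU)=2\mathbf{A}\mU$ for symmetric $\mathbf{A}$ and $\nabla_{\mU}\|\mU-\mX\|^2_2=2(\mU-\mX)$) gives
\[
  \bigl(\mI+\boldsymbol{\Phi}^{\top}{\rm diag}(\vtheta)\boldsymbol{\Phi}\bigr)\mU=\mX,
  \qquad\text{hence}\qquad
  \mU=\bigl(\mI+\boldsymbol{\Phi}^{\top}{\rm diag}(\vtheta)\boldsymbol{\Phi}\bigr)^{-1}\mX.
\]

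The last step is to bring this resolvent into the advertised form $\boldsymbol{\Phi}^{-1}{\rm diag}(\vtheta)\boldsymbol{\Phi}\mX$. I would invoke the tight-frame (Parseval) property $\boldsymbol{\Phi}^{\top}\boldsymbol{\Phi}=\mI$ of the decomposition $\boldsymbol{\Phi}$, writing $\boldsymbol{\Phi}^{-1}$ for the reconstruction (synthesis) operator $\boldsymbol{\Phi}^{\top}$. When $\boldsymbol{\Phi}$ is additionally orthonormal (e.g.\ the graph Fourier basis), one may factor $\mI+\boldsymbol{\Phi}^{\top}{\rm diag}(\vtheta)\boldsymbol{\Phi}=\boldsymbol{\Phi}^{\top}\bigl(\mI+{\rm diag}(\vtheta)\bigr)\boldsymbol{\Phi}$ and conclude $\mU=\boldsymbol{\Phi}^{-1}{\rm diag}\!\bigl(\tfrac{1}{1+\theta_k}\bigr)\boldsymbol{\Phi}\mX$; since $\vtheta$ is a free learnable filter, the reparametrization $\theta_k\mapsto 1/(1+\theta_k)$ recovers exactly $\boldsymbol{\Phi}^{-1}{\rm diag}(\vtheta)\boldsymbol{\Phi}\mX=:\vtheta\star\mX$. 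For a general (redundant) framelet system the same conclusion follows from the first-order approximation $\bigl(\mI+\boldsymbol{\Phi}^{\top}{\rm diag}(\vtheta)\boldsymbol{\Phi}\bigr)^{-1}\approx\mI-\boldsymbol{\Phi}^{\top}{\rm diag}(\vtheta)\boldsymbol{\Phi}=\boldsymbol{\Phi}^{\top}\bigl(\mI-{\rm diag}(\vtheta)\bigr)\boldsymbol{\Phi}$, again up to renaming the filter, which parallels the Taylor step used in the spatial theorem.

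I expect the bookkeeping around $\boldsymbol{\Phi}$ to be the only real subtlety. For framelet transforms $\boldsymbol{\Phi}$ is rectangular --- it stacks several filtered and down-sampled copies of the signal --- so $\boldsymbol{\Phi}\boldsymbol{\Phi}^{\top}$ is an orthogonal projection rather than the identity, which is why the clean factorization of the inverse is exact only in the orthonormal case and must otherwise be replaced by the first-order expansion. I would therefore state the tight-frame assumption explicitly, note that ${\rm diag}(\vtheta)$ acts block-wise on the stacked coefficients, and check that $\boldsymbol{\Phi}^{\top}\mI\boldsymbol{\Phi}=\mI$ so that the substitution of $\mI$ on the coefficient side is legitimate; everything else mirrors the previous proof and is routine.
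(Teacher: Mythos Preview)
Your derivation is correct and, in fact, more careful than the paper's. Both proofs begin identically --- substitute $\mC=\boldsymbol{\Phi}\mU$ and set the gradient of the resulting quadratic to zero --- but diverge at the last step. The paper does not compute the resolvent $(\mI+\boldsymbol{\Phi}^{\top}{\rm diag}(\vtheta)\boldsymbol{\Phi})^{-1}$ as you do; instead it \emph{identifies} $\boldsymbol{\Phi}^{\top}{\rm diag}(\vtheta)\boldsymbol{\Phi}$ with the graph Laplacian $\mL$ via the eigendecomposition $\mL=\mV^{\top}\Lambda\mV$ (so $\boldsymbol{\Phi}$ is read as the eigenvector matrix and $\vtheta$ as the eigenvalues), and then, ``similar to above'', directly asserts the optimizer to be $\mU=\mL\mX=\boldsymbol{\Phi}^{\top}{\rm diag}(\vtheta)\boldsymbol{\Phi}\mX$. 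That final step in the paper is loose --- the first-order condition actually gives $(\mI+\mL)\mU=\mX$, not $\mU=\mL\mX$ --- and the paper is implicitly absorbing the $(\mI+\cdot)^{-1}$ into the learnable filter exactly as you made explicit with your reparametrization $\theta_k\mapsto 1/(1+\theta_k)$.

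So the route is the same in spirit, but your version is the one that actually justifies the stated solution: your orthonormal factorization plus filter reparametrization (or the first-order expansion in the redundant case) is precisely the missing argument behind the paper's one-line jump. The paper buys brevity by invoking the Laplacian identification; your argument buys correctness and also makes clear what assumption on $\boldsymbol{\Phi}$ (tight frame / orthonormality) is needed for the conclusion to be exact rather than first-order.
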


\begin{proof}
    We first rewrite the equation by replacing $\mC$ with $\boldsymbol{\Phi}\mU$, which gives
    \begin{align*}
        \min_{\mU} {\rm tr}(\mU^{\top}\boldsymbol{\Phi}^{\top} {\rm diag}(\vtheta) \boldsymbol{\Phi}\mU)+\|\mU-\mX\|^2_2
    \end{align*}
    Note that the graph Laplacian $\mL=\mV^{\top}\Lambda\mV=\boldsymbol{\Phi}^{\top}{\rm diag}(\vtheta)\boldsymbol{\Phi}$ where $\{(\vv_i,\lambda_i):\vv_i\in\mV,\lambda_i\in\Lambda\}$ are eigenpairs with respect to $\mL$. We then have 
    $$
    {\rm tr}(\mU^{\top}\boldsymbol{\Phi}^{\top} {\rm diag}(\vtheta) \boldsymbol{\Phi}\mU)={\rm tr}(\mU^{\top}\mL\mU)+\|\mU-\mX\|^2_2.
    $$
    Similar to above, the partial derivatives of this function brings the optimal solution that $\mU=\mL\mX=\boldsymbol{\Phi}^{\top}{\rm diag}(\vtheta)\boldsymbol{\Phi}\mX$, which is identical to the convolution function \eqref{func:spectral_conv} below. 
\end{proof}

The optimal solution to the above smoothing problem is identical to spectral graph convolution. Formally, a layer of spectral graph convolution reads
\begin{equation} \label{func:spectral_conv}
    \vtheta\star \mX = \boldsymbol{\Phi}^{-1}{\rm diag}(\vtheta) \boldsymbol{\Phi} \mX,
\end{equation}
where the basis $\boldsymbol{\Phi}$ is closely related to the eigenvectors of graph Laplacian and is determined by the type of transform, such as Fourier \cite{bruna2013spectral}, Wavelet \cite{xu2019graph,zheng2020mathnet} and Framelet \cite{zheng2021framelets} transforms. Compared to spatial convolutions, this genre allows graph signal processing \cite{ortega2018graph} on multiple channels, which pursues both local and global smoothness. 

As the coefficient are graph signals projected to different channels by low-pass or high-pass filters, it naturally separates the local and global smoothness procedure, which makes the over-smoothing and loss-of-expressivity issue less of a concern. That has been said, the recovered graph signal still to some extend suffers from the global over-smoothing issue from $\sL_2$ regularizer, as it suppresses regional fluctuations. To solve this issue, we refer to a similar denoising idea adopted in signal processing. Meanwhile, as neither of the above functions pay explicit attention to structure information, we formulate connection-related noise measurements to fill this gap.

\section{The Noise of Graph Signals}
\label{sec:denoising_preliminary}
This section formulates the regularization schemes related to graph smoothing and subspace extraction to answer the first research question. We start from the conventional analysis model for graph signal denoising and then introduce regularizers that filter the feature and structure noises. 

\subsection{Analysis-Based Graph Regularization}
We assume the observed graph signal $\vx=\vu+\epsilon$ constitutes a graph function $\vu:\sV\mapsto\R$ and Gaussian white noise $\epsilon$. A denoising task recovers the true graph signal $\vu$ from the observation $\vx$. We restrict a sparse representation of graphs in the transformed domain and follow the \emph{analysis based model} \cite{cai2010split} to measure the sparsity priors (that is, the coefficient matrix in the transformed domain is assumed sparse). The unconstrained minimization is set to
\begin{align*}
    \min_{\vu} \|\gD\vu\|_1+H(\vu),
\end{align*}
where $\gD\in\R^{m\times N}$ is a linear transform generated from discrete transforms, such as discrete Fourier transforms or Framelet transforms. Its sparsity is constrained by the $\sL_1$-norm. The $H(\cdot)$ is a smooth convex function that measures the data fidelity, a common choice of which is the empirical risk minimization $H(\vu)=\|\mB\vu-\vx\|_2$ with some linear transformation $\mB$ and observations $\vx$.

\subsection{Penalty Functions Design}
As both node and edge noise are assumed in observed data, we design explicit regularizers individually to reveal the noiseless set of graph signal $\mU$ and adjacency matrix $\mZ$.

\subsubsection{Vertex Feature with Spectral Sparsity}
We start from node feature denoising with $\sL_1$ wavelet regularization. The sparsity and similarity of the signal approximation is measured by
\begin{equation}\label{func:objective_node_init}
    \min_{\mU} \|\gD\mU\|_{1,G}+\frac{1}{2}\|\mU-\mX\|^2_{2,G},
\end{equation}
For a graph signal $\vu$, instead of the Euclidean $\sL_p$-norm, we define a graph $\sL_p$-norm $\|\vu\|_{p,G}:=\left(\sum_i|\vu[i]|^p \cdot d[i]\right)^{\frac1p}$, where $d[i]$ indicates the degree of the $i$th node with respect to $\vu$. This work considers $p=1,2$. One can interpret the graph norm as a weighted-sum version of regularizer, where errors associated to high-degree nodes result in heavier penalties.

\subsubsection{Edge Connection with Self-Expressiveness}
One fundamental assumption in graph convolution design is that connected nodes are more likely to share common characteristics, and they can be useful to distinguish one group of nodes from another. A similar idea is adopted in sparse subspace clustering, where inner-group information flow is described by self-expressiveness. We thus leverage this idea and assume that every node of a graph can be written as a sparse linear combination of its neighbor nodes. For contaminant data with a noise $\mE \in \R^{N\times d}$, we restrict
\begin{align} \label{func:objective:edge_init}
    \min_{\mZ,\mE}&\|\mZ\|_{1}+\|\mE\|_{2,1,G}  \notag\\
    s.t.&~~ \mU=\mZ\mU+\mE, {\rm diag} (\mZ)=0.
\end{align}
Here the sum error $\|\mE\|_{2,1,G} =\sum_{i=1}^{N}D_i\sqrt{\sum_{j=1}^{D_i}|E_{i,j}|^2}$ is weighted by $D_i$, the degree of node $i$. This convex optimization finds an effective sparse representation of node connectivity $\mZ\in\R^{N\times N}$, which can be served as a noiseless graph adjacency matrix. We regularize the error term $\mE$ by a $\sL_{2,1,G}$-norm to concentrates errors on features of small-degree nodes.

\subsection{Discrete Framelet Transform}
As discussed earlier, a graph signal measures its noise level by the sparsity of the transformed graph coefficients. This work takes fast undecimated graph Framelet transforms \cite{dong2017sparse,zheng2021framelets} for multi-channel fast denoising. Formally, the Wavelet Frame, or Framelet, is defined by a filter bank $\rveta:=\{a;b^{(1)},\dots,b^{(K)}\}$ (with $K$ the number of high-pass filters) and the eigen-pairs $\{(\lambda_j,\vu_j)\}_{j=1}^{N}$ of its graph Laplacian $\mL$. The $a$ and $b^{(k)}$ are called the \emph{low-pass} and \emph{high-pass filters} of the Framelet transforms, which preserves the approximation and detail information of the graph signal, respectively. At \emph{scale level} $l=1,\ldots,L$, the low-pass and the $k$th ($k=1,\dots,K$) high pass undecimated Framelet basis at node $p$ are defined as 
\begin{align*} 
    \boldsymbol{\varphi}_{l,p}(v) &= \sum_{\ell=1}^{N} \hat{\alpha}\left(\frac{\lambda_{\ell}}{2^{l}}\right)
    \overline{\vu_{\ell}(p)}\vu_{\ell}(v); \\
    \boldsymbol{\psi}_{l,p}^k(v) &= \sum_{\ell=1}^{N} \widehat{b^{(k)}}\left(\frac{\lambda_{\ell}}{2^{l}}\right)\overline{\vu_{\ell}(p)}\vu_{\ell}(v),
\end{align*}
where $\{\alpha;\beta^{(1)},\dots,\beta^{(K)}\}$ are the associated scaling functions defined by $\rveta$. The corresponding Framelet coefficients for node $p$ at scale $l$ of a given signal $\vx$ are the projections $\langle\boldsymbol{\varphi}_{l,p},\vx\rangle$ and $\langle\boldsymbol{\psi}_{l,p}^k,\vx\rangle$. Note that we formulate the transforms with Haar-type filters \cite{dong2017sparse} with dilation factor $2^l$ to allow efficient transforms.

The computational cost of decomposition (and reconstruction) algorithm of Framelet transforms can be heavy. We thus consider approximating the scaling functions $\alpha$ and $\{\beta^{(1)},\dots,\beta^{(K)}\}$ by Chebyshev polynomials up to $m$-order, respectively, denoted by $\gT^m_0$ and $\{\gT^m_{k}\}_{k=1}^{K}$. For a given level $L$, we define the full set of Framelet coefficients
\begin{align*}
    \gW_{k,1}\vx &= \gT^m_k(2^{-H}\mL)\vx
\end{align*}
at $l=1$. When $l=2,\dots,L$, the coefficients 
\begin{align*}   
    \gW_{k,l}\vx &= {\gT^m_k}(2^{-H -l}\mL){\gT^m_0}(2^{-H-l+1}\mL){\cdots\gT^m_0}(2^{-H}\mL)\vx,
\end{align*}
where the dilation scale $H$ satisfies $\lambda_{\max} \leq 2^H\pi$. In this definition, the finest scale is $1/2^{H+L}$ that guarantees $\lambda_{\ell}/2^{H+L-l}\in(0,\pi)$ for $\ell=1, 2, ..., N.$

\section{A Layer of Graph Feature and Structure Denoising}
\label{sec:ADMMdenoising}
This section constructs the graph denoising scheme concerning the second research question. The signal sparsity is measured under the undecimated Framelet transform system that transforms the graph signal to low-pass and high-passes Framelet coefficients with the $m$-degree Chebyshev polynomial of $\mL$ by a set of $N\times N$ orthonormal decomposition operator $\boldsymbol{\gW}$. The sequence length $(KL+1)$ is determined by the number of high-pass filters $K$ and scale level $L$.

\subsection{ADMM Undecimated Framelet Denoising Model}
We now present the denoising layer design. Here we interpret the forward propagation of a GNN layer as an optimization iteration. The input $\mX$ is a noisy representation of the raw signal or its hidden representation. Our target is to recover $\mU$ and $\mZ$, a clean set of graph feature and structure representations that is free from pollution. The objective function and the corresponding update rule is detailed below.

\subsubsection{Objective Function Design}
The objective function that penalizes both node and edge noises reads
\begin{align} \label{func:objective_full_init}
    \min_{\mU,\mZ,\mE}& \|\vnu\boldsymbol{\gW}\mU\|_{1,G}+\|\mZ\|_{1}+\lambda_1\|\mE\|_{2,1,G}+\frac{\lambda_2}2\|\mU-\mX\|^2_{2,G}, \notag\\
    s.t.&~~\operatorname{diag}(\mZ)=0,\; \mU = \mZ\mU+\mE, \;\mZ \vone = \vone. 
\end{align}
Here $\|\vnu\boldsymbol{\gW}\mU\|_{1,G}:=\sum_{(k,l)\in\sB}\nu_{k,l}\|\gW_{k,l}\mU\|_{1,G}$ promotes the sparsity of the Framelet coefficients of the $k$th high-pass element on the $l$th scale level, where $\sB:=\{(0,L)\}\bigcup\{(k,l),1\leq k\leq K, 1\leq l\leq L\}$. The $\vnu$ is a set of tuning parameters with respect to tight decomposition operator $\gW_{k,l}$ in high pass. 
The $\mU\in\R^{N\times d}$ is the target (noiseless) signal approximation of $N$ nodes, $\mL$ is the associated graph Laplacian, and $\mX$ is the noisy input representation. 

The above function can be rewritten with a variable splitting strategy. Define $\mY = \mZ - \text{diag}(\mZ)$, $\mQ := \boldsymbol{\gW}\mU$, i.e., $\mQ_{k,l} := \gW_{k,l}\mU \in \R^{N\times d}$, gives
\begin{align} \label{func:objective_full}
    \min_{\mU,\mZ,\mE}& \|\vnu\mQ\|_{1,G}+\|\mZ\|_{1}+\lambda_1\|\mE\|_{2,1,G}+\frac{\lambda_2}2\|\mU-\mX\|^2_{2,G}, \notag\\
    s.t.&~~ \mU = \mY\mU+\mE, \;\mY \vone = \vone, \; \mY = \mZ - \text{diag}(\mZ). 
\end{align}
The associated augmented Lagrangian \cite{nocedal2006numerical} reads 
\begin{align} \label{func:objective_full_lagrangian}
    &\gL(\mU,\mZ,\mE,\mQ, \mY;\Lambda_1,\Lambda_2,\Lambda_3,\Lambda_4,\mu_1,\mu_2,\mu_3, \mu_4) \notag \\
    =& \sum_{k,l}\nu_{k,l}\|\mQ_{k,l}\|_{1,G}+\|\mZ\|_{1}+\lambda_1\|\mE\|_{2,1,G}+\frac{\lambda_2}2\|\mU-\mX\|^2_{2,G} \notag\\
    &+\frac{\mu_1}{2}\|\mU-\mY\mU -\mE\|_2^2    +\sum_{k,l}\frac{\mu_{2}}{2}\|\mQ_{k,l}-\gW_{k,l}\mU\|_2^2 \notag \\
    &+\frac{\mu_3}{2}\|\mY \vone-\vone\|_2^2 + \frac{\mu_4}2\|\mY-\mZ+\text{diag}(\mZ)\|^2_2 \notag \\ 
    & +\text{tr}\left(\Lambda_1^{\top}(\mU-\mY\mU-\mE)\right)+\sum_{k,l}\text{tr}\left(\Lambda_{2;k,l}^{\top}(\mQ_{k,l}-\gW_{k,l}\mU)\right) \notag \\ 
    &+\Lambda_{3}^{\top}(\mY \vone-\vone) +\text{tr}(\Lambda^{\top}_4(\mY - \mZ + \text{diag}(\mZ))). 
\end{align}
We call $\Lambda_1, \Lambda_{2;k,l}\in\R^{N\times d}; \Lambda_3\in\R^{N};  \Lambda_4\in\R^{N\times N}$ the Lagrangian multipliers. The $\Lambda_i$ ($i=1,2,3,4$) can be considered as the running sum of errors associated with its constraint, and the target is to choose the optimal $\Lambda_i$ that minimize the residual of the $i$th constraint. 
The augmented Lagrangian penalty parameters $\mu_1,\mu_{2;k,l},\mu_3, \mu_4$ are positive by definition. They stand for upper bounds over all the adaptive penalty parameters. In practice we value an identical $\mu_2$ to $\mu_{2;k,l}$ for faster matrix inversion in updating $\mU$. We will omit the subscripts for the rest of paper. 

\begin{remark}
    Conventional splitting methods select $\mu_i$s in advance. This paper leverages adaptive tuning methods that only require a proper initialization on $\mu^{(0)}_i$s. 
\end{remark}

\begin{remark}
    For simplicity, we can define the low-pass $\vnu_{0,L}=0$ and high-passse $\vnu_{k,l}=4^{-l-1}\nu_0$ by initializing $\nu_0$ to avoid the overwhelming fine-tuning work in $\vnu$.
\end{remark}
\begin{remark}
    We keep the $\text{diag}(\mZ^{(t)})$ term here for theoretical completeness. For implementation we remove the term $\text{diag}(\mZ^{(t)})$ as it equals zero. 
\end{remark}

\begin{remark}
    With $\tL:=\mI-\mY$, we can rewrite $\mU-\mY\mU-\mE = \tL\mU-\mE$, and $\tL$ denotes the normalized Laplacian of noiseless adjacency matrix.
\end{remark}

\subsubsection{ADMM Update Scheme}\label{Sec:ADMM}
The objective function is optimized with the alternating direction method of multipliers (ADMM) \cite{gabay1976dual}. There involves six (sets of) parameters to update. This work updates the primal variables $\mU,\mZ,\mE,\mY,\mQ$ iteratively, but it is possible to design a parallel update scheme, see \cite{deng2017parallel}. 
At the $t+1$th iteration, we update iteratively 
\begin{enumerate}[leftmargin=*]
    \item $\mU^{(t+1)}$
    \begin{flalign} \label{func:objective_1_u}
        =&\left(\lambda_2\mD+\mu^{(t)}_1\tL^{(t)\top}\tL^{(t)}+\mu_{2}^{(t)}\mI\right)^{-1}& \notag\\
        &\left(\mu^{(t)}_1\tL^{(t)\top}\mE^{(t)}+\lambda_2\mD\mX-\tL^{(t)\top}\Lambda_1^{(t)} \right.& \notag\\
        &\left.+\sum_{k,l}(\mu_{2}^{(t)}\gW_{k,l}^{\top}\mQ^{(t)}_{k,l}+\gW_{k,l}^{\top}\Lambda_{2;k,l})\right)&
    \end{flalign}
    where the noiseless graph Laplacian $\tL^{(t)}$ is updated with $\mY^{(t)}$. The matrix inversion reaches a precise solution by the linear solver \cite{Golub1996matrix} or the Cholesky Factorization \cite{Haddad2009Cholesky}. 
    \item $\mZ^{(t+1)} = \mR - \text{diag}(\mR)$
    \begin{align} \label{func:objective_1_z}
        \text{ where }&\mR := \mathcal{T}_{1/\mu^{(t)}_4}\left(\mY^{(t)} + {\mu^{(t)}_4}^{-1}\Lambda^{(t)}_4\right).
    \end{align}
    The $\mathcal{T}_{\eta}(x) = \text{sign}(x)\max\{|x|-\eta,0\}=\text{ReLU}(x-\eta)-\text{ReLU}(-x-\eta)$ 
    is a \emph{soft-threshold operator}.
    \item $\mE_i^{(t+1)}$
    \begin{flalign} \label{func:objective_1_e}
        &=\mathcal{T}_{1/\mu_1^{(t)}}^i\left(\tL^{(t+1)}\mU^{(t+1)}+{\Lambda_1^{(t)}}/{\mu_1^{(t)}}\right).&
    \end{flalign}
    The above update rule works on the $i$th row of $\mE$. The $\mathcal{T}_{\eta}^i(x) = ({x_i}/{\|x_i\|_2})\max\{\|x_i\|_2-\eta,0\}$ denotes a row-wise soft thresholding for group $\sL_2$-regularization. 
    \item $\mY^{(t+1)}$ 
    \begin{align} \label{func:objective_1_y}
        \hspace{-4mm}&=\left(\mu^{(t)}_1(\mU^{(t+1)}-\mE^{(t+1)})\mU^{(t+1)\top} + \mu^{(t)}_3\vone\vone^{\top} + \mu^{(t)}_4\mZ^{(t+1)} \right. \notag \\
        \hspace{-4mm}&\left. +\Lambda^{(t)}_1\mU^{(t+1)\top} - \Lambda^{(t)}_3\vone^{\top}-\Lambda^{(t)}_4\right) \left({\mu^{(t)}_4}^{-1}\mI  \right. \\
        \hspace{-4mm}&\left.-{\mu^{(t+1)}_4}^{-1} \widetilde{\mU}^{(t+1)}\left[\mu^{(t+1)}_4 \mI + \widetilde{\mU}^{(t+1)\top} \widetilde{\mU}^{(t+1)}\right]^{-1} \widetilde{\mU}^{(t+1)\top}\right). \notag
    \end{align}
    The $\widetilde{\mU}^{(t+1)} =[\sqrt{\mu^{(t+1)}_1}\mU^{(t+1)}, \sqrt{\mu^{(t)}_3}\vone] \in\mathbb{R}^{N\times (d+1)}$. The $\vone$ is a $N$ dimensional one vector. We assume $N\gg d$, i.e., the graph size is much larger than the node feature dimension.
    \item $\mQ_{k,l}^{(t+1)}[i,:]$
    \begin{align} \label{func:objective_1_q}
        \hspace{-4mm}=\mathcal{T}_{\nu_{k,l}d_i/\mu_{2}^{(t)}}\left(\gW_{k,l}\mU^{(t+1)}[i,:]-{\mu^{(t)}_{2}}^{-1}\Lambda_{2;k,l}^{(t)}[i,:]\right).
    \end{align}
    The above update rule acts row-wisely where $\mQ_{k,l}^{(t+1)}[i,:]$ denotes the $i$th row of $\mQ_{k,l}^{(t+1)}$.
    \item \textit{Update $\Lambda^{(t+1)}$s and $\mu^{(t+1)}$s:}
    \begin{align} 
        \Lambda_{1}^{(t+1)}&=\Lambda_{1}^{(t)}+\mu_{1}^{(t)}(\mU^{(t+1)} - \mY^{(t+1)}\mU^{(t+1)}-\mE^{(t+1)}); \notag \\
        \Lambda_{2;k,l}^{(t+1)}&=\Lambda_{2;k,l}^{(t)}+\mu_{2}^{(t)}(\mQ_{k,l}^{(t+1)}- \notag \gW_{k,l}\mU^{(t+1)}); \\
        \Lambda_{3}^{(t+1)}&=\Lambda_{3}^{(t)}+\mu_{3}^{(t)}(\mY^{(t+1)} \vone-\vone); \notag  \\
        \Lambda_{4}^{(t+1)}&=\Lambda_{4}^{(t)}+\mu_{4}^{(t)}\left(\mY^{(t+1)} - \mZ^{(t+1)} + \text{diag}(\mZ^{(t+1)})\right); \notag \\
        \mu_{i}^{(t+1)}&=\min\left(\rho\mu_{i}^{(t)}, \mu_{i,\max}\right), \quad i = 1,2,3,4.
        \label{func:objective_1_mu_lambda}
    \end{align}
\end{enumerate}

The pseudo-code is summarized in Algorithm~\ref{algo:ADMMDenoise}.
\begin{algorithm}[t]
    \hsize=\textwidth
    \SetKwData{step}{Step}
    \SetKwInOut{Input}{Input}\SetKwInOut{Output}{Output}
    \BlankLine
    \Input{iteration $t$, scale level $L$, high-pass number $K$}
    \Output{$\mU^{(t)}, \mZ^{(t)}$}
    Initialization: $\sB=\{(0,L))\}\bigcup\{(k,l),1\leq k \leq K,1\leq l\leq L\}$\\
    Initialization: $\mY^{(0)}, \mZ^{(0)}, \mU^{(0)}, \mQ^{(0)}, \mE^{(0)}; \Lambda^{(0)}s, \mu^{(0)}s$\\
    \For{$k \leftarrow 1$ \KwTo $K$}{
        \For{$(k,l) \in \sB$}{
        Update $\mU^{(t)},\mZ^{(t)},\mE^{(t)},\mY^{(t)},\mQ_{k,l}^{(t)}$ by \eqref{func:objective_1_u}-\eqref{func:objective_1_e};\\
        Update $\Lambda_1^{(t)},\Lambda_{2;k,l}^{(t)},\Lambda_3^{(t)},\Lambda_4^{(t)}$ by \eqref{func:objective_1_mu_lambda};\\
        Update $\mu_1^{(t)},\mu_{2}^{(t)},\mu_3^{(t)},\mu_4^{(t)}$ by \eqref{func:objective_1_mu_lambda}
        }
    }
    Update loss function $\gL$ by \eqref{func:objective_full_lagrangian}.
    \caption{\textsc{DoT} for graph denoising}
    \label{algo:ADMMDenoise}
\end{algorithm}

\subsection{Convergence of the main algorithm}
The original optimization problem \eqref{func:objective_full_init}, in our proposed scheme, is considered as a component of neural network. In the last section, we leverage the ADMM scheme to solve the augmented Lagrangian problem \eqref{func:objective_full_lagrangian} where the optimal $\mZ$ and $\mU$ are both outputs of network layers.  The $t$th iteration of the ADMM update scheme, in particular, acts as the $t$th layer of neural network that updates from $\{\mU^{(t-1)},\mZ^{(t-1)}\}$ to $\{\mU^{(t)},\mZ^{(t)}\}$. This section provides the theorem that guarantees the convergence of ADMM algorithm for \eqref{func:objective_full_lagrangian}. 

\begin{theorem}\label{theorem:convergence}
Let $\{\Gamma_t= (\mU^{(t)}, \mZ^{(t)}, \mE^{(t)}, \mQ^{(t)}, \mY^{(t)}, \Lambda^{(t)}_1,\Lambda^{(t)}_2,$ $\Lambda^{(t)}_3,\Lambda^{(t)}_4)\}^{\infty}_{t=1}$ be the sequence generated  by the ADMM scheme \eqref{func:objective_1_u}-\eqref{func:objective_1_mu_lambda} presented in subsection \ref{Sec:ADMM}. Under the assumption that $\mU^{(t)}$ is bounded, the sequence $\{\Gamma_t\}$ satisfies the following properties:
\begin{enumerate}
\item The generated sequence $\{\Gamma_t\}_{t=1}^{\infty}$ is bounded.
\item  The sequence $\{\Gamma_t\}_{t=1}^{\infty}$ has at least one accumulation point $\Gamma_* =(\mU_*, \mZ_*,  \mE_*, \mQ_*, \mY_*, \Lambda_{1*},\Lambda_{2*}, \Lambda_{3*},\Lambda_{4*})$ that satisfies first-order optimality KKT conditions:  
\begin{align*}
    &\mU_* = \mY_* \mU_* + \mE_*; \\ 
    &\mathcal{Q}_* = \mathcal{W}_*\mU_*;\\
    &\mY_*\mathbf{1} = \mathbf 1; \\
    &\mY_* = \mZ_* - \text{diag}(\mZ_*); \\
    &\Lambda_{1*}\in\lambda_1\partial_{\mE}\|\mE_*\|_{2,1,G};\\
    &-\Lambda_{2*; k,l}\in \nu_{k,l}\partial_{Q_{k,l}}\|\mQ_{*;k,l}\|_{1,G};\\ 
    &\Lambda_{1*}\mU^T_{*}+\Lambda_{3*}\mathbf 1^T - \Lambda_{4*} = 0;\\
    &\Lambda_{4*} \in \partial_{\mZ}\|\mZ_*\|_1.
\end{align*}
\item  $\{\mU^{(t)}\}, \{\mZ^{(t)}\},  \{\mE^{(t)}\}, \{\mQ^{(t)}\}$ and $\{\mY^{(t)}\}$ each is Cauchy sequence, and thus converges to its critical point.
\end{enumerate}
\end{theorem}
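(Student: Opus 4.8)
The plan is to follow the standard template for convergence of (multi-block, nonconvex) ADMM applied to an augmented Lagrangian, specialised to the bilinearly coupled constraint $\mU=\mY\mU+\mE$; the three asserted properties are proved in order, and the hypothesis that $\{\mU^{(t)}\}$ is bounded is the lever that tames the nonconvex coupling.

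\emph{Step 1 (boundedness, property 1).} Given boundedness of $\{\mU^{(t)}\}$, I would push it through the closed-form updates. Equation \eqref{func:objective_1_e} writes $\mE^{(t+1)}$ as a row-wise soft-threshold of $\tL^{(t+1)}\mU^{(t+1)}+\Lambda_1^{(t)}/\mu_1^{(t)}$, \eqref{func:objective_1_q} writes $\mQ_{k,l}^{(t+1)}$ as a soft-threshold of $\gW_{k,l}\mU^{(t+1)}-\Lambda_{2;k,l}^{(t)}/\mu_2^{(t)}$, \eqref{func:objective_1_z} bounds $\mZ^{(t+1)}$ through $\mY^{(t)}+\Lambda_4^{(t)}/\mu_4^{(t)}$, and \eqref{func:objective_1_y} expresses $\mY^{(t+1)}$ through $\mU^{(t+1)}$, $\mE^{(t+1)}$, $\mZ^{(t+1)}$ and the multipliers (here one uses $N\gg d$ so the matrix inverse in \eqref{func:objective_1_y} is well posed). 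The multipliers are controlled by combining the first-order optimality of each primal subproblem with the dual updates in \eqref{func:objective_1_mu_lambda}: telescoping gives $\Lambda_4^{(t+1)}\in\partial_{\mZ}\|\mZ^{(t+1)}\|_1$, $\Lambda_1^{(t+1)}\in\lambda_1\partial_{\mE}\|\mE^{(t+1)}\|_{2,1,G}$, $-\Lambda_{2;k,l}^{(t+1)}\in\nu_{k,l}\partial_{Q_{k,l}}\|\mQ^{(t+1)}_{k,l}\|_{1,G}$, and $\Lambda_1^{(t+1)}\mU^{(t+1)\top}+\Lambda_3^{(t+1)}\vone^{\top}-\Lambda_4^{(t+1)}=0$. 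Since subdifferentials of the $\sL_1$, $\sL_{2,1,G}$ and graph-$\sL_1$ penalties are uniformly norm-bounded, and $\mu_i^{(t)}\in[\mu_i^{(0)},\mu_{i,\max}]$ stays away from $0$ and $\infty$, all multipliers --- hence all of $\Gamma_t$ --- are bounded.

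\emph{Step 2 (vanishing differences, accumulation point, KKT; property 2).} I would establish a sufficient-decrease inequality for the augmented Lagrangian evaluated along the iterates, $\gL(\Gamma_{t+1})\le\gL(\Gamma_t)-c\sum_{V\in\{\mU,\mZ,\mE,\mQ,\mY\}}\|V^{(t+1)}-V^{(t)}\|^2+e_t$, where the error $e_t$ is proportional to $\sum_i(\mu_i^{(t+1)}-\mu_i^{(t)})$ and is therefore summable ($\mu_i^{(t)}$ nondecreasing and bounded), and $c>0$ once the penalties have frozen at $\mu_{i,\max}$. Because $\gL$ is bounded below on the bounded set from Step 1, summation yields $\sum_t\|\Gamma_{t+1}-\Gamma_t\|^2<\infty$, so $\Gamma_{t+1}-\Gamma_t\to 0$; feeding this into the dual updates forces the feasibility relations $\mU_*=\mY_*\mU_*+\mE_*$, $\mQ_*=\boldsymbol{\gW}\mU_*$, $\mY_*\vone=\vone$, $\mY_*=\mZ_*-\text{diag}(\mZ_*)$ in any limit. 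By Bolzano--Weierstrass the bounded $\{\Gamma_t\}$ has a subsequence converging to some $\Gamma_*$; passing to the limit in the subproblem optimality conditions, using outer semicontinuity of the convex subdifferentials together with $\Gamma_{t+1}-\Gamma_t\to 0$, produces exactly the KKT system displayed in the statement.

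\emph{Step 3 (full convergence, property 3).} To strengthen subsequential to full convergence I would use the Kurdyka--\L{}ojasiewicz (KL) framework: the augmented Lagrangian is a finite sum of semialgebraic / piecewise-polynomial functions (the $\sL_1$, $\sL_{2,1,G}$ and graph-$\sL_1$ penalties, the quadratic fidelity, and polynomial equality terms) and hence satisfies the KL inequality; combining the sufficient-decrease estimate of Step 2, a relative-error bound $\operatorname{dist}(0,\partial\gL(\Gamma_{t+1}))\le C\|\Gamma_{t+1}-\Gamma_t\|$ (again using Step 1 and the subproblem optimality), and the KL property via the Attouch--Bolte--Svaiter argument gives $\sum_t\|\Gamma_{t+1}-\Gamma_t\|<\infty$. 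Consequently each of $\{\mU^{(t)}\},\{\mZ^{(t)}\},\{\mE^{(t)}\},\{\mQ^{(t)}\},\{\mY^{(t)}\}$ is Cauchy and converges, necessarily to the critical point $\Gamma_*$ of Step 2.

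\emph{Main obstacle.} The delicate step is the sufficient-decrease lemma in Step 2, precisely because $\mU=\mY\mU+\mE$ is bilinear: the $\mU$- and $\mY$-subproblems both contain the product $\mY\mU$, so while $\mu_1^{(t)}$ is still increasing neither block is uniformly strongly convex and the descent gained on one block can be eroded by the cross terms of the other. Controlling this needs (i) the assumed boundedness of $\{\mU^{(t)}\}$ to bound the operator norms entering the $\mY$-update and the curvature of the $\mu_1$-penalty, and (ii) the fact that $\mu_i^{(t)}$ eventually saturates at $\mu_{i,\max}$, after which the iteration is a fixed-penalty nonconvex ADMM and the textbook descent estimate applies. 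With that lemma secured, Steps 1 and 3 and the KKT limit-passage are comparatively routine.
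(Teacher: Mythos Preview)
Your Step~1 is essentially the paper's: both derive the subdifferential inclusions $\Lambda_4^{(t+1)}\in\partial\|\mZ^{(t+1)}\|_1$, $\Lambda_1^{(t+1)}\in\lambda_1\partial\|\mE^{(t+1)}\|_{2,1,G}$, $-\Lambda_{2;k,l}^{(t+1)}\in\nu_{k,l}\partial\|\mQ^{(t+1)}_{k,l}\|_{1,G}$ from the subproblem optimality and dual updates, bound those multipliers via the dual norm, and then recover boundedness of $\Lambda_3^{(t+1)}$ from the $\mY$-optimality identity $\Lambda_1^{(t+1)}\mU^{(t+1)\top}+\Lambda_3^{(t+1)}\vone^\top-\Lambda_4^{(t+1)}=0$ together with the assumed bound on $\mU^{(t)}$.

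Where you diverge is in the role of $\mu_i^{(t)}$. You read the update $\mu_i^{(t+1)}=\min(\rho\mu_i^{(t)},\mu_{i,\max})$ as eventually freezing at $\mu_{i,\max}$, and then build Steps~2--3 on a sufficient-decrease inequality plus the KL/Attouch--Bolte--Svaiter machinery. The paper's proof instead treats $\mu_i^{(t)}=\rho^{t}\mu_i^{(0)}\to\infty$ (effectively dropping the cap) and runs a pure \emph{penalty-method} argument: feasibility at any accumulation point follows immediately from the dual updates, e.g.\ $\mU^{(t+1)}-\mY^{(t+1)}\mU^{(t+1)}-\mE^{(t+1)}=(\Lambda_1^{(t+1)}-\Lambda_1^{(t)})/\mu_1^{(t)}\to0$ because the numerator is bounded and the denominator blows up; and the Cauchy property in Step~3 is obtained by elementary estimates of the form $\|V^{(t+1)}-V^{(t)}\|_F^2\le C/\mu^{2(t)}$, summable since $\sum_t\rho^{-2t}<\infty$. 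No descent lemma and no KL are used.

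The trade-off is exactly the obstacle you flag: your route must confront the bilinear constraint $\mU=\mY\mU+\mE$ head-on to get sufficient decrease at a \emph{fixed} penalty, which is genuinely delicate for multiblock nonconvex ADMM and would need an extra assumption on the size of $\mu_{i,\max}$ that the theorem does not supply. The paper sidesteps this entirely by letting $\mu\to\infty$, which buys a short and elementary proof of Cauchy-ness but at the price of an inconsistency with the capped update rule as written in \eqref{func:objective_1_mu_lambda}. If you adopt the paper's unbounded-$\mu$ reading, your KL apparatus is unnecessary; if you keep $\mu$ bounded, your identified obstacle is real and not resolved by what the paper proves.
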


We detail the full proof in Appendix~\ref{sec:app:convergenceProof} for the sake of consciousness. In practice, the ADMM typically defines an inner iteration number, such as $10$, in advance. This pre-defined value provides a viable fast convergence for ADMM.



\subsection{Ablation Denoising Models}
In addition to the proposed all-in-one \textsc{DoT} denoising optimization, we separate the objective into several sub-tasks and investigate the effectiveness of these individual designs. We also compare with the TV regularizer that is conventionally used for signal denoising as well as some graph node denoising tasks \cite{dong2017sparse,chen2021bigcn}. The corresponding objective function and the update rules for the three ablation models are detailed below. 

\subsubsection{Node Feature Denoising}
We define node feature denoising model similar to \eqref{func:objective_node_init}, and reformulated the problem with
$\mQ=\boldsymbol{\gW}\mU$:
\begin{equation} \label{func:objective_node_full}
    \min_{\mU} \|\vnu\mQ\|_{1,G}+\frac12\|\mU-\mX\|^2_{2,G}, \quad\text{s.t.}\quad \mQ = \boldsymbol{\gW}\mU.
\end{equation}
The function design is similar to that of \cite{dong2017sparse}, except that we optimize the objective with ADMM instead of the Split-Bregman \cite{goldstein2009split} method. The associated augmented Lagrangian problem reads
\begin{align}
    &\min_{\mU} \sum_{k,l}\nu_{k,l}\|\mQ_{k,l}\|_{1,G}+ \frac12\|\mU-\mX\|^2_{2,G} \\
    &+\sum_{k,l}\frac{\mu_{2}}{2}\|\mQ_{k,l}-\gW_{k,l}\mU\|_2^2  +\sum_{k,l}\text{tr}\left(\Lambda_{2;k,l}^{\top}(\mQ_{k,l}-\gW_{k,l}\mU)\right). \notag 
\end{align}

The above objective function can solved with the following ADMM scheme:
\begin{enumerate}[leftmargin=*]
    \item $\mU^{(t+1)}$:
    \begin{align*}
        =\left(\mD+\mu_{2}^{(t)}\mI\right)^{-1}  
        \left(\mD\mX+\sum_{k,l}\gW_{k,l}^{\top}(\mu_{2}^{(t)}\mQ^{(t)}_{k,l}+\Lambda_{2;k,l}^{(t)})\right);
    \end{align*}
    \item $\mQ_{k,l}^{(t+1)}[i,:]$:
    \begin{align*}
        =\mathcal{T}_{\nu_{k,l}d_i/\mu_{2}^{(t)}}\left(\gW_{k,l}\mU^{(t+1)}[i,:]-\frac1{\mu^{(t)}_{2}}\Lambda_{2;k,l}^{(t)}[i,:]\right);
    \end{align*}
    \item \textit{Update $\Lambda_{2;k,l}^{(t+1)}$ and $\mu_{2}^{(t+1)}$:}
    \begin{align*}
        \Lambda_{2;k,l}^{(t+1)}=&\Lambda_{2;k,l}^{(t)}+\mu_{2}^{(t)}(\mQ_{k,l}^{(t+1)}-\gW_{k,l}\mU^{(t+1)}); \\
        \mu_{2}^{(t+1)}=&\min\left(\rho\mu_{2}^{(t)}, \mu_{2,\max}\right).
    \end{align*}
\end{enumerate}

In step one of updating $\mU^{(t+1)}$, as the matrix $(\mD+\mu_{2}^{(t)}\mI)$ is diagonal, its inverse can be calculated swiftly without any approximation operation. Also, the update rule in step two is identical to that of \eqref{func:objective_1_q}, and the same batch operation can be applied here.

\subsubsection{Edge Connection Denoising}
We next consider a hybrid $\sL_{2,1}$ regularization in the case of noisy edge connections, and it is again a subset of the full objective function \eqref{func:objective_full}:
\begin{align} \label{func:objective_edge_full}
    \min_{\mU,\mZ,\mE}& \|\mZ\|_{1}+\lambda_1\|\mE\|_{2,1,G}+\frac{\lambda_2}{2}\|\mU-\mX\|^2_{2,G},\\
    s.t.&~~\text{diag}(\mZ) = 0,\;  \mU = \mZ\mU+\mE, \;\mZ \vone = \vone. \notag
\end{align}
With $\tL=\mI-\mY$, its augmented Lagrangian reads 
\begin{align}
    \gL(\mU)=&\frac12\|\mU-\mX\|^2_{2,G}+\frac{\mu_1}{2}\Bigl\|\tL\mU -\mE\Bigr\|_2^2\notag \\ 
    &+\text{tr}\left(\Lambda_1^{\top}(\tL\mU-\mE)\right).
\end{align}
By ${\partial \gL}/{\partial \mU}=0$, the update rule for $\mU^{(t+1)}$ reads
\begin{align*}
    \mU^{(t+1)}=&
    \left(\lambda_2\mD+\mu_1^{(t)}\tL^{(t)\top}\tL^{(t)}\right)^{-1}  \\
    &\left(\lambda_2\mD\mF+\mu^{(t)}_1\tL^{(t)\top}\mE^{(t)}-\Lambda_1^{(t)}\tL^{(t)}\right)\\
    \approx& \left(\frac1{\lambda_2}\mD^{-1}-\frac1{\lambda^2_2}\mu_1^{(t)}\mD^{-1}\tL^{(t)\top}\tL^{(t)}\mD^{-1}\right)\\
    & \left(\lambda_2\mD\mF+\mu^{(t)}_1\tL^{(t)\top}\mE^{(t)}-\tL^{(t)\top}\Lambda_1^{(t)}\right).
\end{align*}
Similar to before, the matrix inverse can be approximated by a Linear or Cholesky solver for computational simplicity.

\subsubsection{TV Regularizer for Node Feature}
Aside from acting the $\sL_1$-norm on Framelet coefficients in \eqref{func:objective_node_full}, the regularizer can be applied directly on the noisy raw signal with the TV regularizer for node denoising. The TV regularizer is conventionally used for signal denoising as well as some graph node denoising tasks \cite{dong2017sparse,chen2021bigcn}. Formally, the objective function is defined as
\begin{align} \label{func:objective_tv_full}
    \min_{\mU} \frac{\alpha}{2}{\rm tr}(\mU^{\top}\mL\mU)+\frac12\|\mU-\mF\|^2_{2,G}.
\end{align}
where $\alpha$ is a tunable hyper-parameter, and $\mL$ denotes the (unnormalized) graph Laplacian.

The update rule of $\mU^{(t+1)}$ of the above function is rather straight-forward. By ${\partial \sL}/{\partial \mU}=0$, we have
\begin{align*}
    \mU^{(t+1)}=(\mD+\alpha\mL)^{-1}\mD\mX \approx (\mI-\alpha\mD^{-1}\mL)\mX.
\end{align*}

\section{Numerical Experiments}
\label{sec:exp}
This section reports the performance of our proposed method in comparison to vanilla graph convolutional layers as well as three ablation models. The main experiment tests on three node classification tasks with different types of noise (feature, structure, hybrid) testified. All experiments run with PyTorch on NVIDIA\textsuperscript{\textregistered} Tesla V100 GPU with 5,120 CUDA cores and 16GB HBM2 mounted on an HPC cluster. Experimental code in PyTorch can be found at \texttt{https://github.com/bzho3923/GNN\_DoT}.

\subsection{Experimental Protocol}
We validate in this experiment the robustness of proposed methods against feature and/or structure perturbations. We poison the graph before the training process with a black-box scheme, which requires no extra training details.

\subsubsection{Dataset and Baseline}
We conduct experiments on three benchmark networks with undirected edges: 
\begin{itemize}
    \item \textbf{Cora} \cite{sen2008collective,yang2016revisiting}: a text classification dataset with $1,433$ bag-of-words representation of $2,708$ documents, and they are connected with $5,429$ citation links (edges). The target is to classify each document into one of $7$ topics. 
    \item \textbf{Citeseer} \cite{sen2008collective,yang2016revisiting}: another citation network with similar constructions as \textbf{Cora}, but contains $3,327$ nodes with $3,703$-dimensional input features and $4,732$ edges. The number of class is $6$.
    \item \textbf{Wiki-CS} \cite{mernyei2020wiki}: a snapshot of Wikipedia database from August 2019 with $216,123$ among $11,701$ articles of $10$ topics corresponding to branches of computer science. Each article is expressed by $300$ features from GloVe word embeddings. 
\end{itemize}
For all three datasets, the training set constitutes $20$ random labeled samples from each class. The validation and test data are $500$ and $1,000$, respectively.

The architecture for all models is fixed to two convolution layers filling a denoising module that overlays multiple denoising layers. We adopt one of the three graph convolutions:
\begin{itemize}
    \item \textsc{GCN} \cite{Kipf2017semi}: graph convolutional network, a $\sL_2$-based global smoothing model that is widely used for semi-supervised node classification tasks. 
    \item \textsc{GAT} \cite{velivckovic2017graph}: graph attention network, a message-passing neural network with self-attention aggregator.
    \item \textsc{UFG} \cite{zheng2021framelets}: a spectral graph neural network enhanced by fast undecimated Framelet transform. The framework uses conventional ReLU activation (\textsc{UFG-R}) for graph signal processing but can be extra robust to graph perturbation with shrinkage activation (\textsc{UFG-S}).
\end{itemize}
The denoising module falls into one of the following choices: no-denoising layer, \textsc{DoT} layers, and one of the three ablation denoising layers (node-ADMM, edge-ADMM, and node-TV) that we introduced in Section~\ref{sec:ADMMdenoising}.

\begin{table*}[t]
    \caption{Denoising performance against hybrid error. The accuracy (in percentage) is reported with standard deviation from 10 repetitive runs. }
    \label{tab:50_hybrid}
    \begin{center}
    \resizebox{0.9\textwidth}{!}{
    \begin{tabular}{llcccccc}
    \toprule
    Model & Dataset & Noise-free & Noisy & \textsc{DoT} & node-ADMM & edge-ADMM & node-TV  \\ \midrule
    \multicolumn{1}{l|}{\multirow{3}{*}{\textsc{GCN}}} & Cora & $82.23${\scriptsize$\pm0.42$} & \cellcolor{gray!25}$54.31${\scriptsize$\pm0.94$} &
    \bm{$64.98${\scriptsize$\pm1.10$}}&
    $63.84${\scriptsize$\pm1.61$}&
    $53.24${\scriptsize$\pm0.75$}&
    $54.58${\scriptsize$\pm1.35$}\\
    \multicolumn{1}{l|}{} & Citeseer & $72.30${\scriptsize$\pm0.93$} & \cellcolor{gray!25}$34.24${\scriptsize$\pm2.13$} &
    \bm{$48.16${\scriptsize$\pm0.99$}}&
    $45.06${\scriptsize$\pm1.22$}&
    $35.22${\scriptsize$\pm1.91$}&
    $34.71${\scriptsize$\pm1.21$}\\
    \multicolumn{1}{l|}{} & Wiki-CS & $78.82${\scriptsize$\pm0.38$} & \cellcolor{gray!25}$62.27${\scriptsize$\pm0.61$} &
    \bm{$64.98${\scriptsize$\pm0.31$}}&
    $64.69${\scriptsize$\pm0.75$}&
    $62.60${\scriptsize$\pm0.18$}&
    $63.22${\scriptsize$\pm0.08$}\\ \midrule
    \multicolumn{1}{l|}{\multirow{3}{*}{\textsc{GAT}}} & Cora & $82.28${\scriptsize$\pm0.53$} & \cellcolor{gray!25}$53.25${\scriptsize$\pm2.88$} & \bm{$64.45${\scriptsize$\pm1.78$}} & $64.31${\scriptsize$\pm1.47$} & $55.51${\scriptsize$\pm1.49$} & $53.38${\scriptsize$\pm3.58$} \\
    \multicolumn{1}{l|}{} & Citeseer & $71.26${\scriptsize$\pm0.60$} & \cellcolor{gray!25}$36.17${\scriptsize$\pm2.14$} &
    \bm{$46.22${\scriptsize$\pm1.23$}}&
    $45.49${\scriptsize$\pm1.47$}&
    $33.94${\scriptsize$\pm2.54$}&
    $35.41${\scriptsize$\pm2.64$}\\
    \multicolumn{1}{l|}{} & Wiki-CS & $77.58${\scriptsize $\pm0.12$} & \cellcolor{gray!25}$59.22${\scriptsize$\pm1.62$} &
    \bm{$61.52${\scriptsize$\pm0.65$}}&
    $61.25${\scriptsize$\pm0.91$}&
    $61.07${\scriptsize$\pm0.56$}&
    $60.15${\scriptsize$\pm0.47$}\\ \midrule
    \multicolumn{1}{l|}{\multirow{3}{*}{\textsc{UFG-R}}} & Cora & $83.59${\scriptsize $\pm0.63$} & \cellcolor{gray!25}$51.83${\scriptsize$\pm2.23$} &
    \bm{$63.86${\scriptsize$\pm0.75$}}&
    $61.34${\scriptsize$\pm0.95$}&
    $49.28${\scriptsize$\pm1.05$}&
    $56.87${\scriptsize$\pm2.05$}\\
    \multicolumn{1}{l|}{} & Citeseer & $72.52${\scriptsize $\pm0.44$} & \cellcolor{gray!25}$34.83${\scriptsize$\pm1.89$} &
    \bm{$45.57${\scriptsize$\pm0.99$}}&
    $45.44${\scriptsize$\pm0.46$}&
    $34.83${\scriptsize$\pm0.70$}&
    $35.86${\scriptsize$\pm1.40$}\\
    \multicolumn{1}{l|}{} & Wiki-CS & $75.09${\scriptsize $\pm0.43$} & \cellcolor{gray!25}$57.06${\scriptsize$\pm0.81$}&
    \bm{$58.13${\scriptsize$\pm0.56$}}&
    $57.81${\scriptsize$\pm0.54$}&
    $57.67${\scriptsize$\pm0.57$}&
    $57.64${\scriptsize$\pm0.57$}\\ \midrule
    \multicolumn{1}{l|}{\multirow{3}{*}{\textsc{UFG-S}}} & Cora & $83.59${\scriptsize $\pm0.63$} & \cellcolor{gray!25}$55.23${\scriptsize$\pm2.87$} &
    \bm{$61.86${\scriptsize$\pm1.17$}}&
    $58.48${\scriptsize$\pm1.05$}&
    $55.84${\scriptsize$\pm2.39$}&
    $56.54${\scriptsize$\pm3.81$}\\
    \multicolumn{1}{l|}{} & Citeseer & $72.52${\scriptsize $\pm0.44$} & \cellcolor{gray!25}$32.18${\scriptsize$\pm3.46$} &
    \bm{$45.06${\scriptsize$\pm1.51$}}&
    $43.06${\scriptsize$\pm1.47$}&
    $34.58${\scriptsize$\pm0.84$}&
    $34.59${\scriptsize$\pm0.88$}\\
    \multicolumn{1}{l|}{} & Wiki-CS & $75.09${\scriptsize $\pm0.43$} & \cellcolor{gray!25}$56.57${\scriptsize$\pm0.51$}&
    $56.71${\scriptsize$\pm0.64$}&
    $56.65${\scriptsize$\pm0.84$}&
    $56.92${\scriptsize$\pm0.68$}&
    \bm{$57.31${\scriptsize$\pm0.16$}}\\ 
    \bottomrule\\[-2.5mm]
    \end{tabular}
    }
    \end{center}
\end{table*}

\subsubsection{Training Setup}
We take data perturbation in pre-processing. As values of embedded features are binary for the two citation networks and decimal for \textbf{Wiki-CS}, we spread $25\%$ poison with random binary noise for the former datasets and Gaussian white noise with 0.25 standard deviation for the latter one. The structure perturbation on the three binary adjacency matrices is unified to a $25\%$ noise ratio with an unchanged number of edge connections. That is, we firstly randomly delete $12.5\%$ and then create $14.3\%$ of total links to keep the number of perturbed edge connection same as original graph.

For model training, we fix the number of the hidden neuron at $16$ and the dropout ratio at $0.5$ for all the models, with \textsc{Adam} optimizer. 
The network architecture for baseline models is set to two graph convolutional layers with an activation function and a dropout layer in between. The choice of the activation function follows the suggestions provided by their authors, where $ReLU$ is applied for \textsc{GCN} and \textsc{UFGConv-R}, $eLU$ is used for \textsc{GAT}, and the shrinkage activation is selected for \textsc{UFGConv-S}. The denoising layers are addressed before the second convolutional layer following another dropout layer. We removed the activation operation after the first convolutional layer, considering the denoising layers play the role of filtering latent features, and it is similar to an non-linear activation operation. All methods activate the output with a softmax for multi-class classification tasks.  
The number of epochs equals $200$. The key hyperparameters are fine-tuned with grid search. In particular, we search the learning rate and ($L_2$) weight decay from $\{0.0005,0.001,0.005,0.01,0.05\}$
and the dilation of UFG from $\{2,3,4\}$ for the base convolution. When training with the denoising layers, we fix the optimal learning rate, weight decay, dilation and $\mu_3,\mu_4$ ${1}$, and tune $\mu_1,\mu_2$ from $\{1,3,5,7,9\}$ ), and $\nu_0$ from $\{5, 10, 50, 100, 500\}$. Other unspecified hyper-parameters, such as the maximum number of level $L$ in \textsc{UFG} and the number of head in \textsc{GAT}, follows the default values suggested by PyTorch Geometric.

\begin{figure}[t]
    \centering
  \subfloat[\label{1a}]{%
       \includegraphics[width=0.48\linewidth]{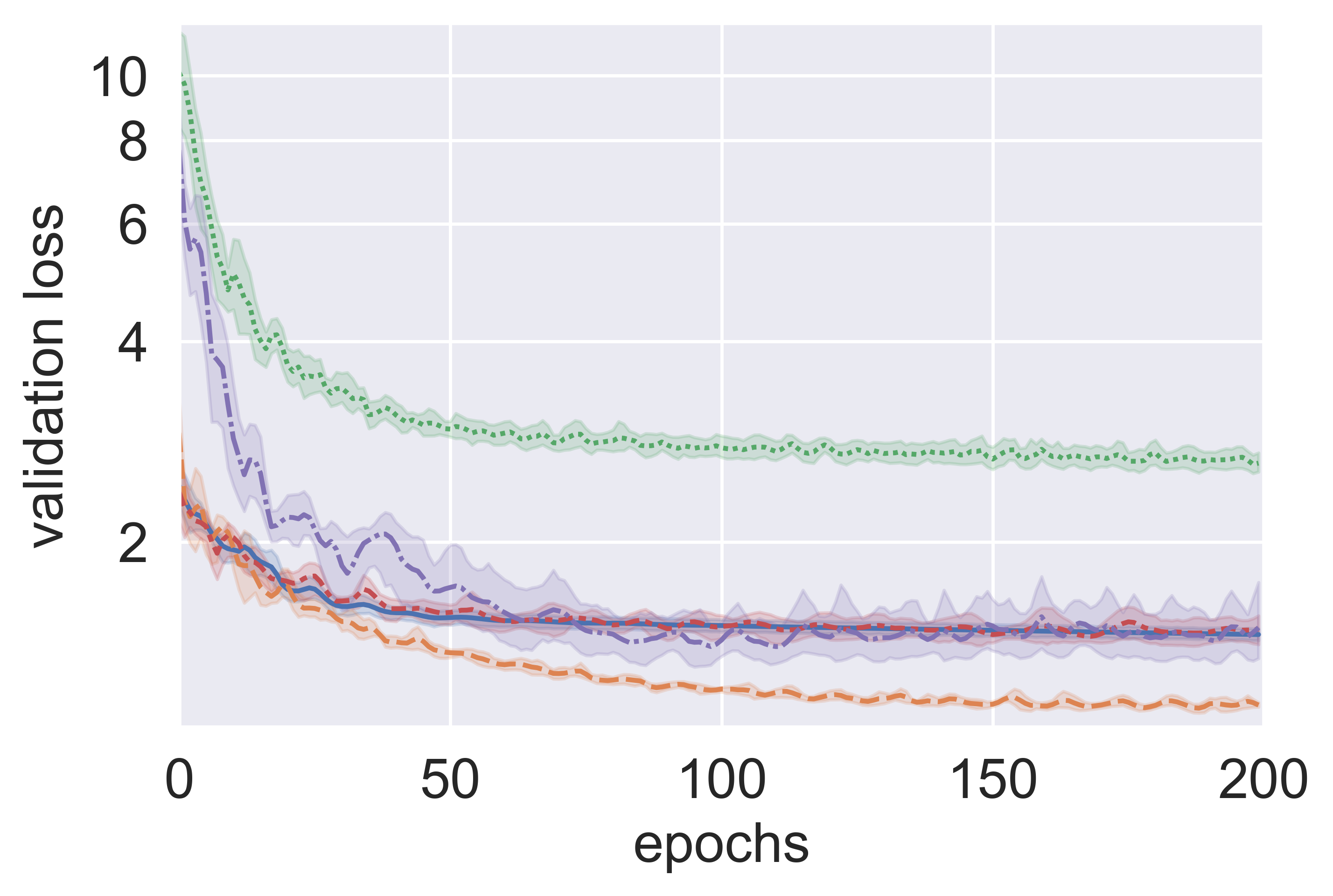}}
  \subfloat[\label{1b}]{%
        \includegraphics[width=0.48\linewidth]{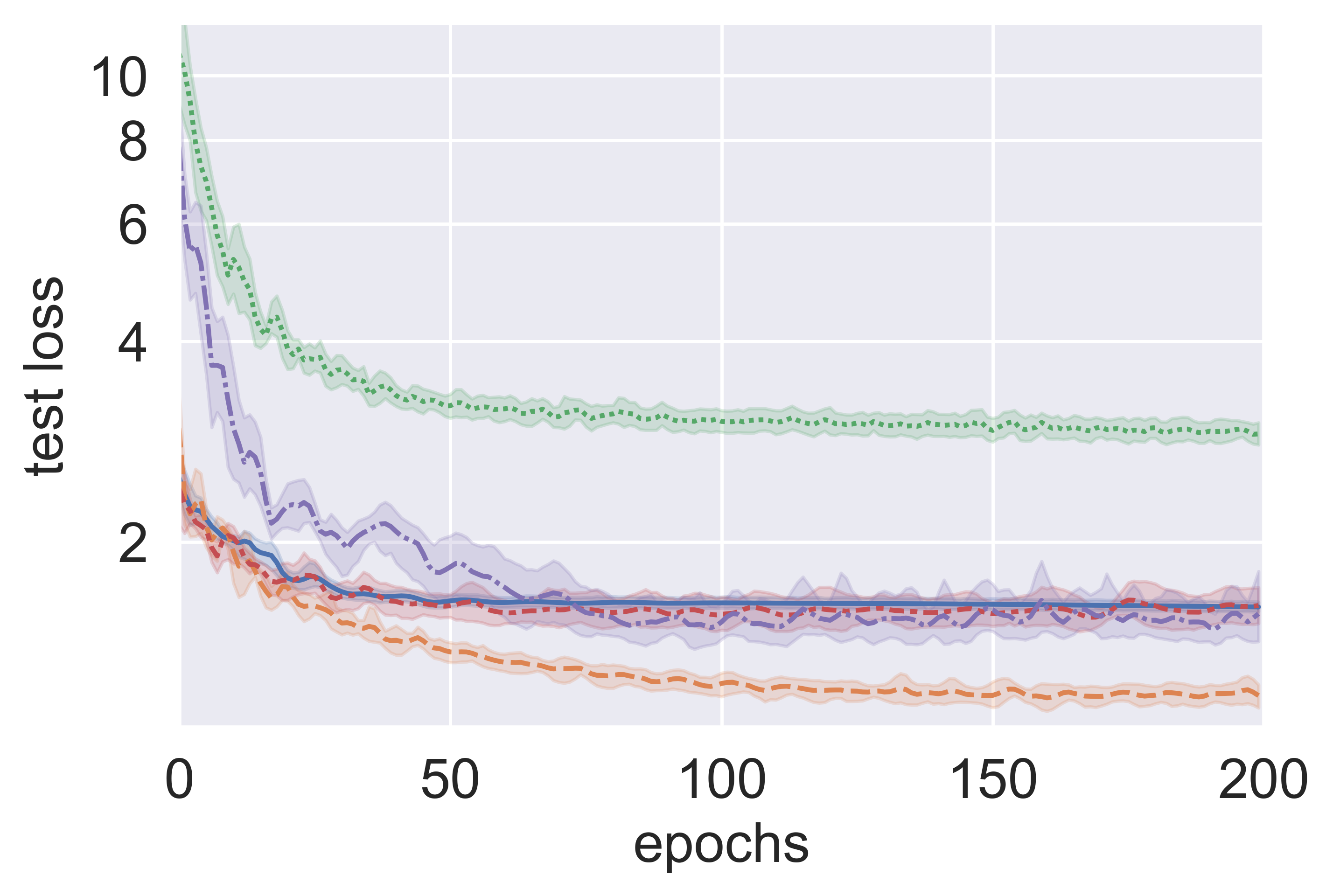}}
    \\
  \subfloat[\label{1c}]{%
        \includegraphics[width=0.48\linewidth]{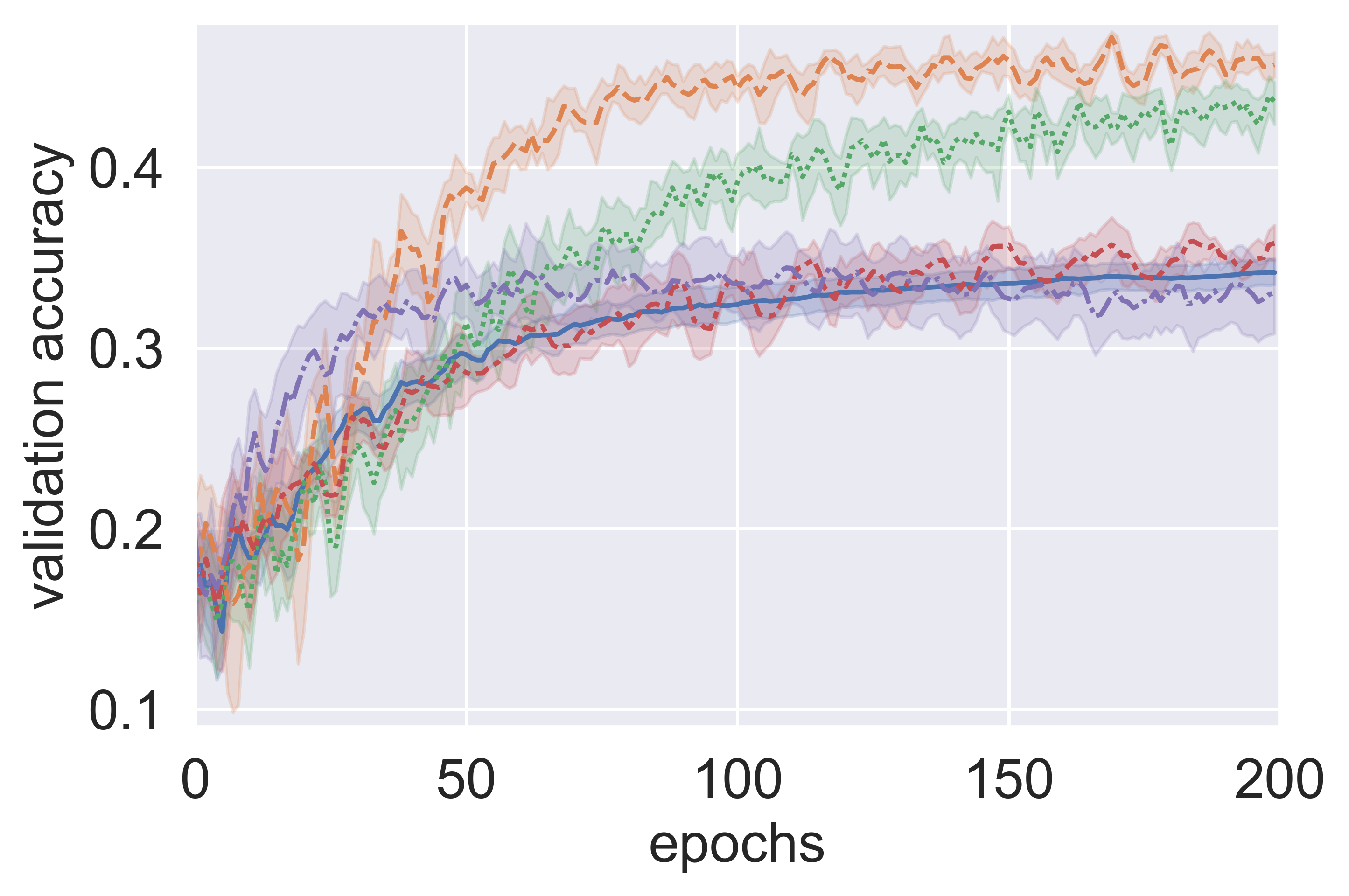}}
  \subfloat[\label{1d}]{%
        \includegraphics[width=0.48\linewidth]{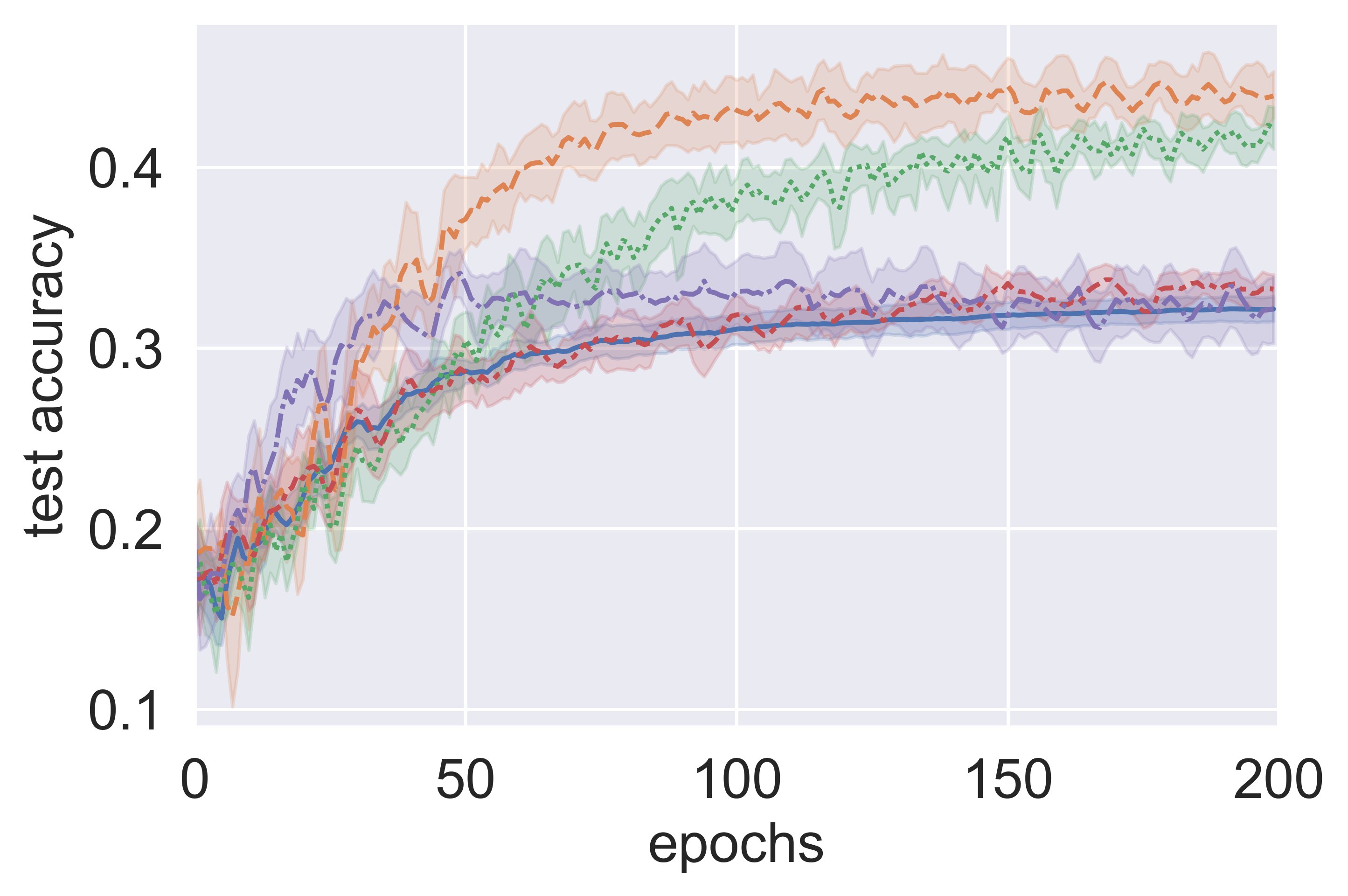}}
    \\
  \subfloat{
        \includegraphics[trim={0 3.8cm 0 3.85cm}, clip, width=0.95\linewidth]{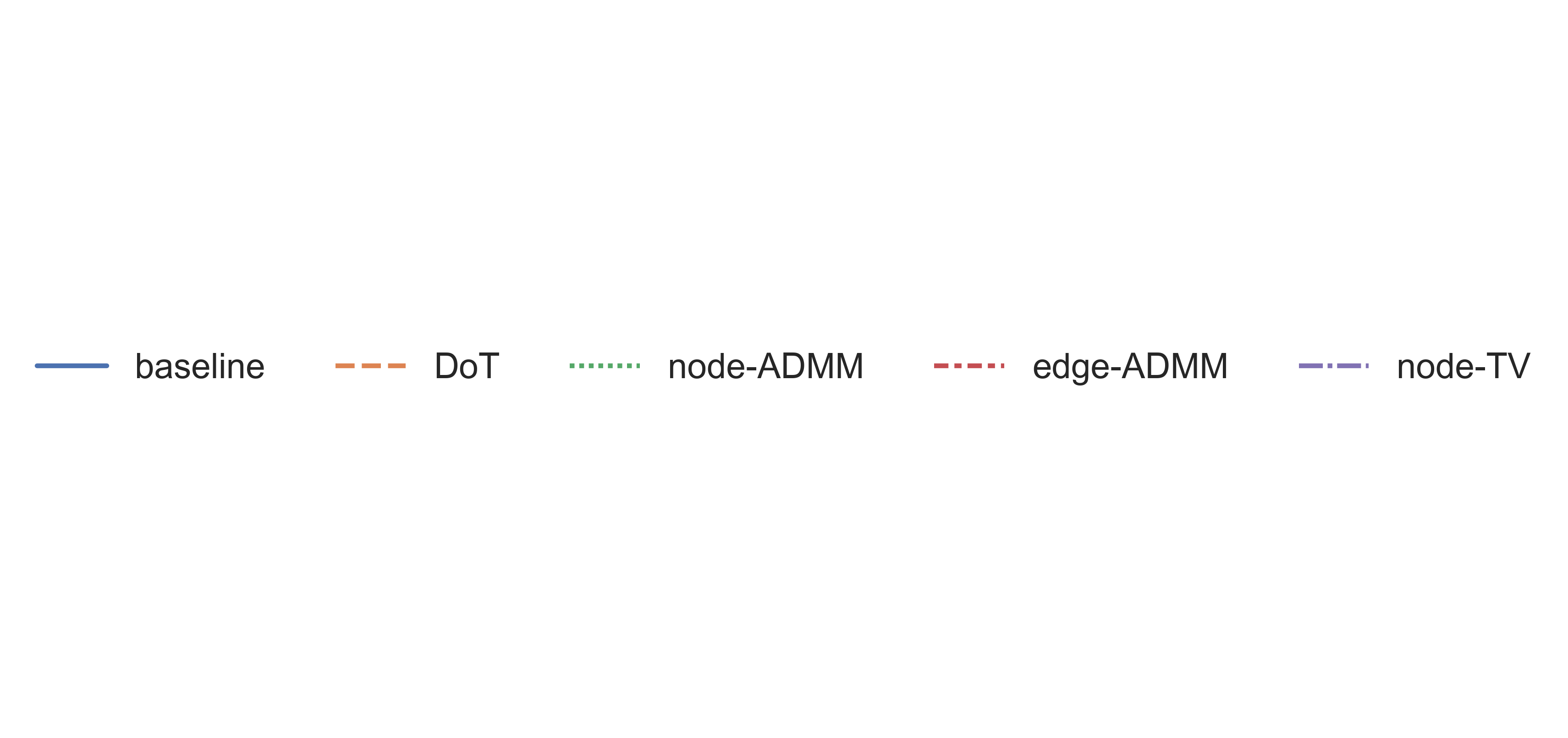}}
  \caption{Learning curve with \textsc{UFG-S} convolution on \textbf{Citeseer}, hybrid noise. We report both loss (top row) and accuracy (bottom row) curves on validation (left column) and test (right column) sets.}
  \label{fig:learningCurve} 
\end{figure}

\subsection{Results Analysis}
\subsubsection{Case 1: Hybrid Perturbation}
We report in Table~\ref{tab:50_hybrid} the average accuracy of all methods over $10$ repetitive runs. Our \textsc{DoT} outperforms ablation methods in most of the scenarios and improves up to $41\%$ against the baseline scores (in gray). The only exception is when denoising \textbf{Wiki-CS} with \textsc{UFG-S}, where \textsc{DoT} is beaten by the TV regularizer with a slightly higher accuracy. 

The superior performance of \textsc{DoT} is mainly driven by the effective node denoising module. In most cases as reported in the table, the node-ADMM method achieves the second-highest scores while the performance does not distinct from \textsc{DoT}'s. In opposite, the effect of edge denoising is less significant. Most scores in the edge-ADMM column have minor improvements over the baselines, not mention some of them actually have a hurtful of accuracy because of the absence of node-ADMM module. One possible explanation of this phenomenon is that the edge connections of graphs in a graph convolution merely directly influence the expressivity of graph embedding, especially when no targeted hostile attack is experienced. Our experiment designs non-targeted data pollution in advance, so that after a graph convolutional layer, the structure noise is implicitly reflected by feature noises. Consequently, edge noise no longer has the way to dominate the model performance, and it is sufficient to denoising on hidden embeddings from the last convolutional layer, which collects both type of noises.

While the quantitative measurements proves the promising denoising power of our proposed \textsc{DoT}, we test its robust learning process in Figure~\ref{fig:learningCurve}. In particular, we present the learning curve of \textsc{DoT} as well as its competitors' on \textbf{Citeseer}. Compared to the baseline and its ablations, \textsc{DoT} (in orange) converges rapidly and reaches stationary points at a lower level on the two loss curves. The variance are also thinner, which demonstrates a stable and reliable learning process of the trained model. 

\begin{figure*}[t]
    \centering
    \subfloat{%
          \includegraphics[width=0.3\linewidth]{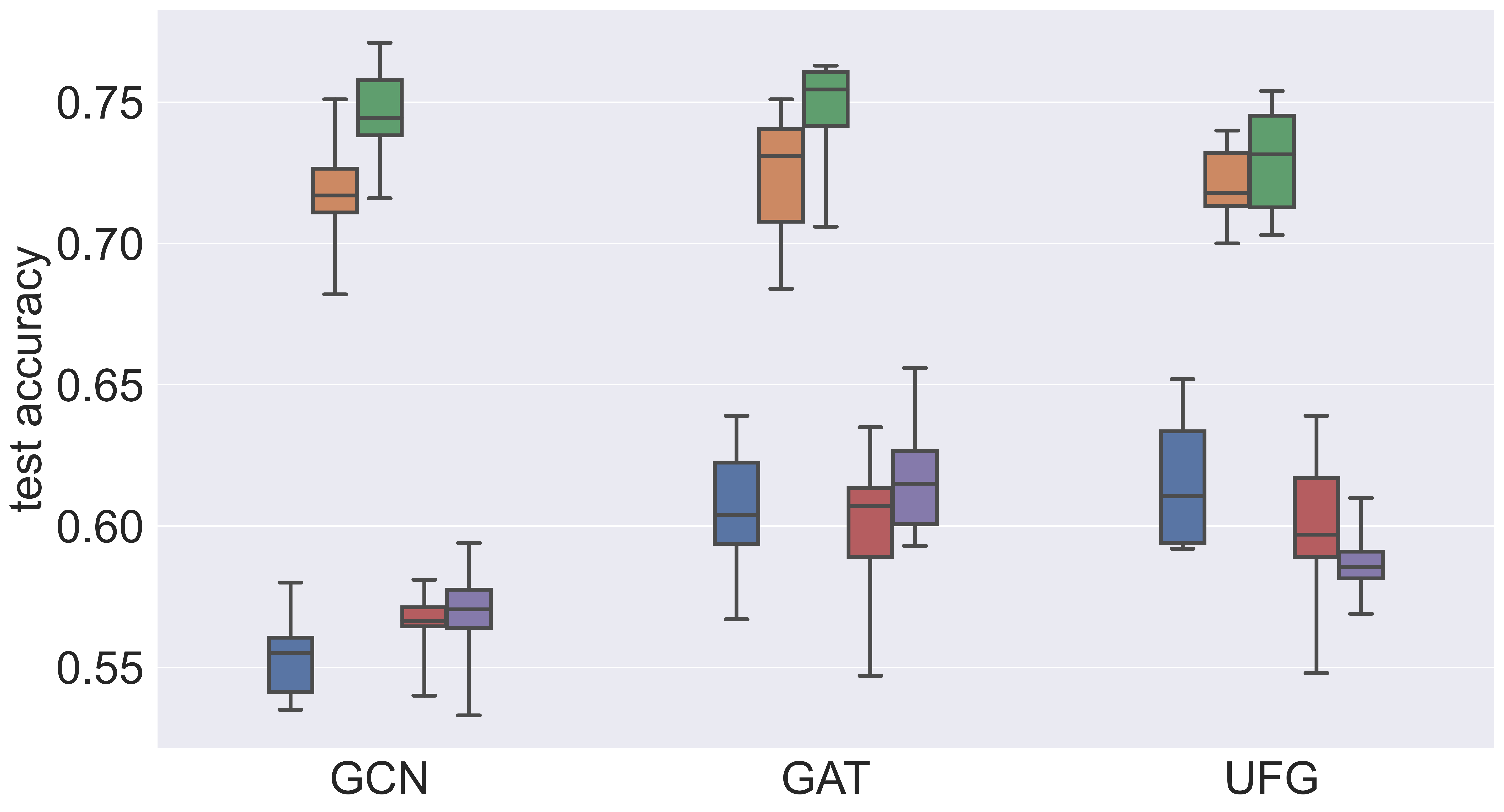}}
      \hfill
    \subfloat{%
          \includegraphics[width=0.3\linewidth]{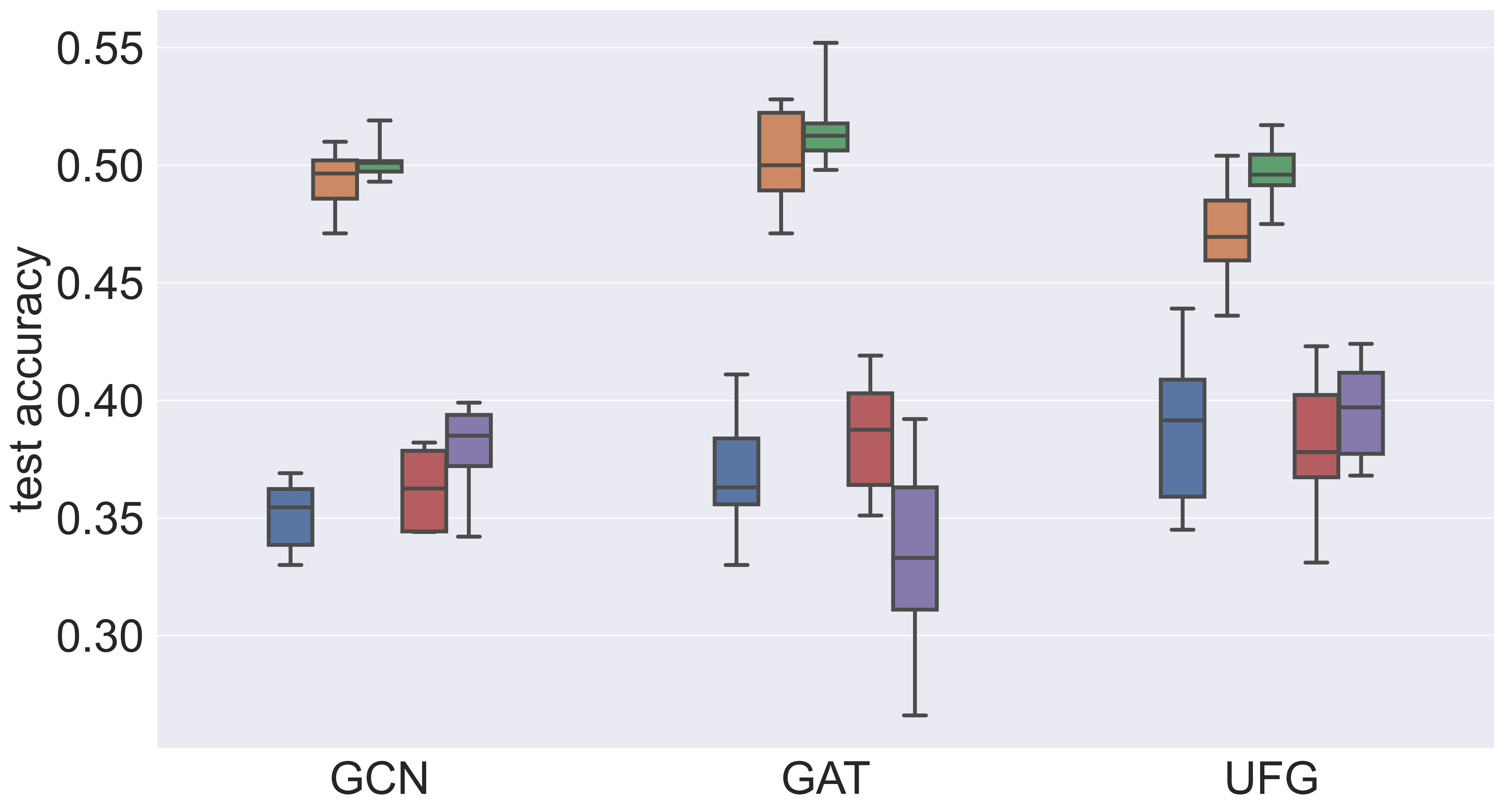}}
      \hfill
    \subfloat{%
          \includegraphics[width=0.3\linewidth]{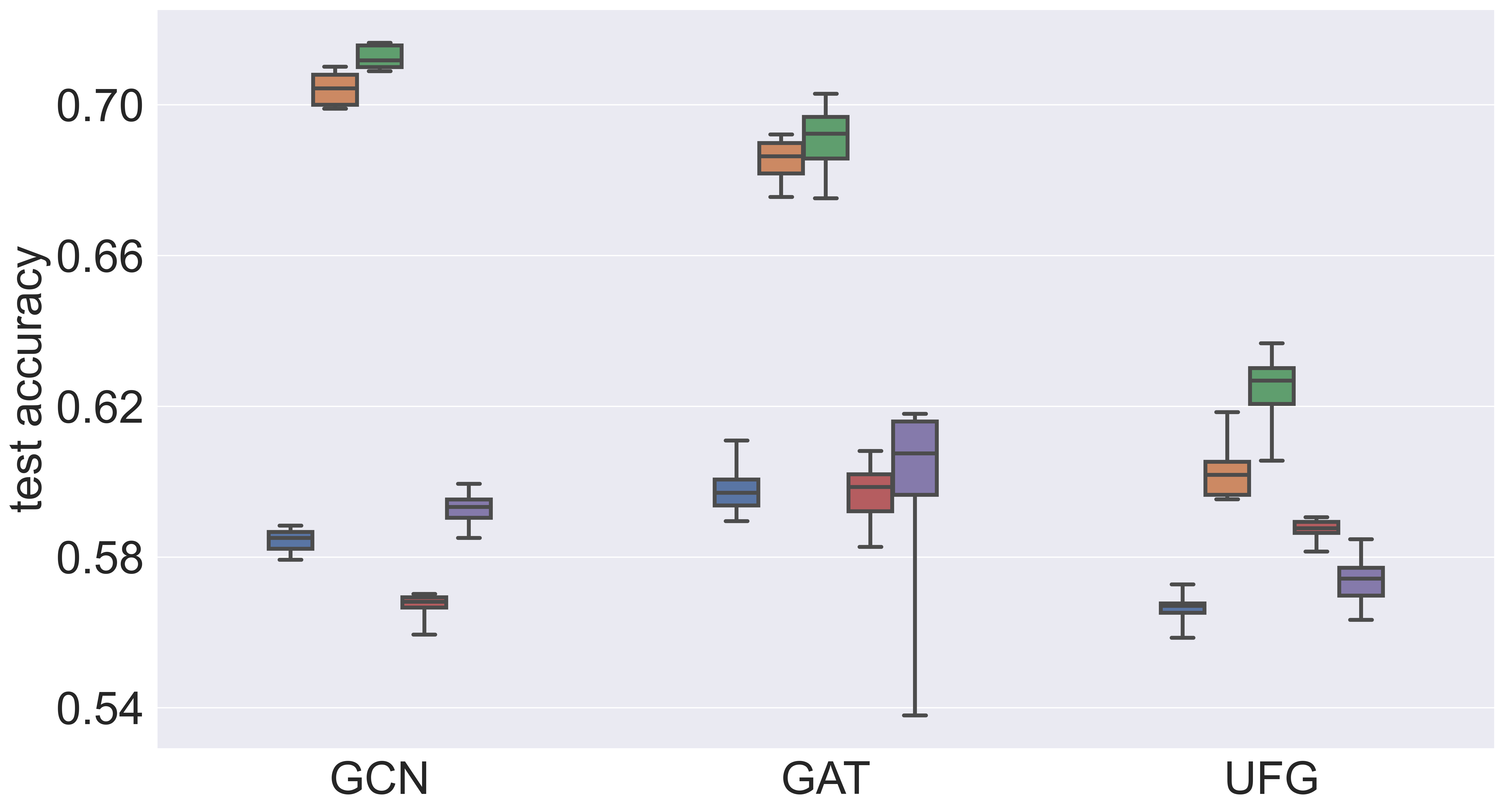}}
      \\
    \subfloat{%
          \includegraphics[width=0.3\linewidth]{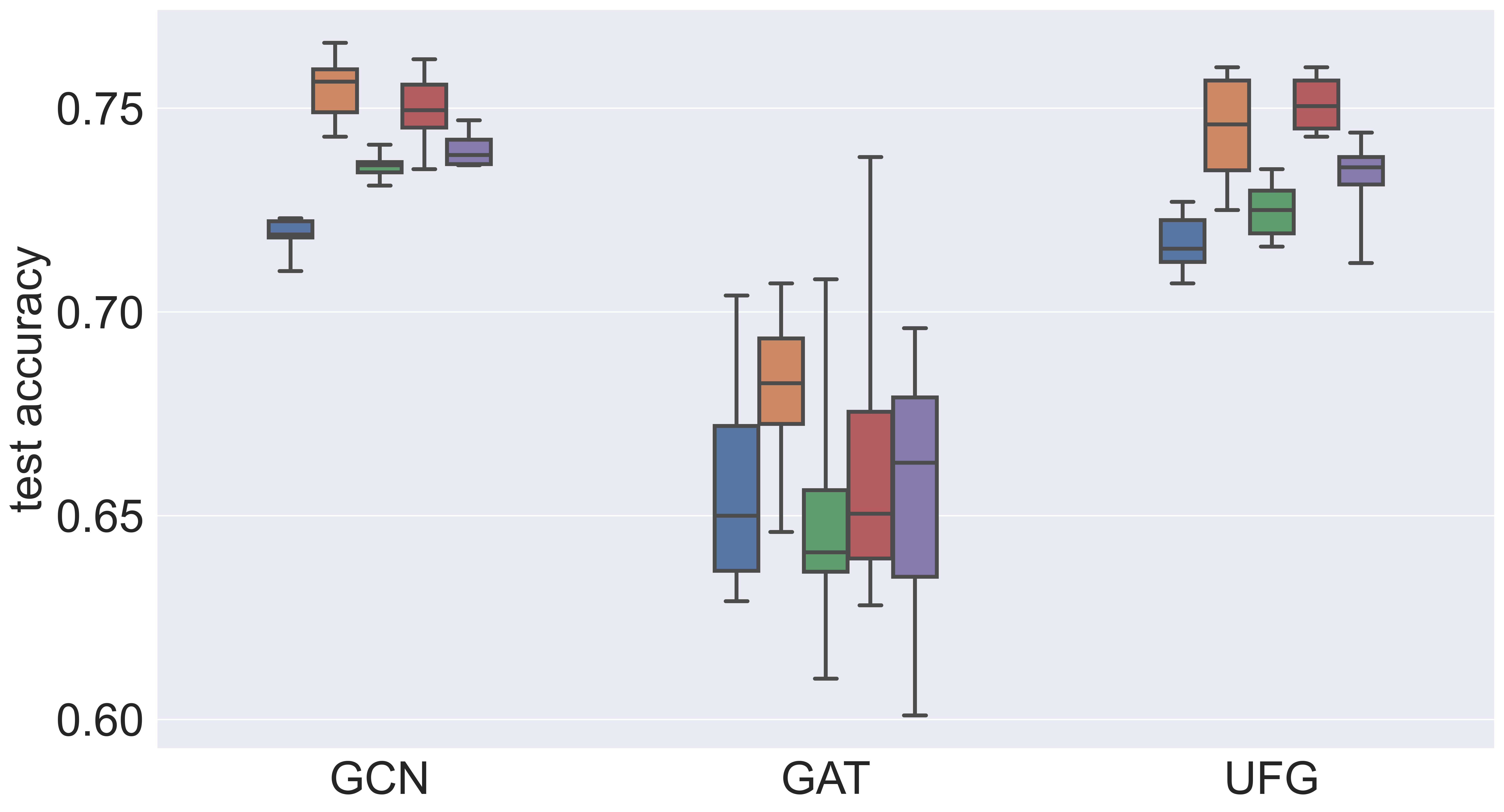}}
      \hfill
    \subfloat{%
          \includegraphics[width=0.3\linewidth]{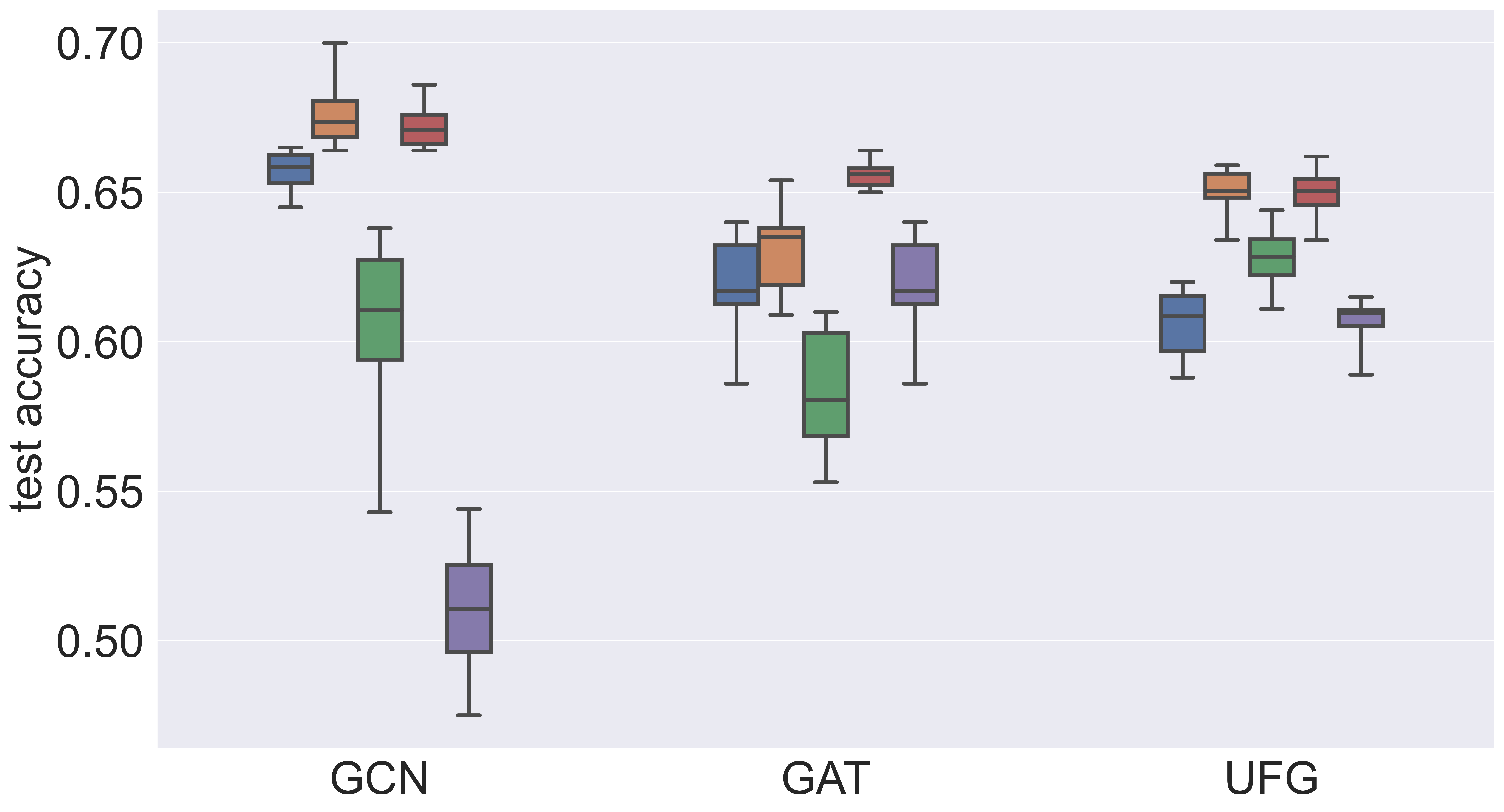}}
      \hfill
    \subfloat{%
          \includegraphics[width=0.3\linewidth]{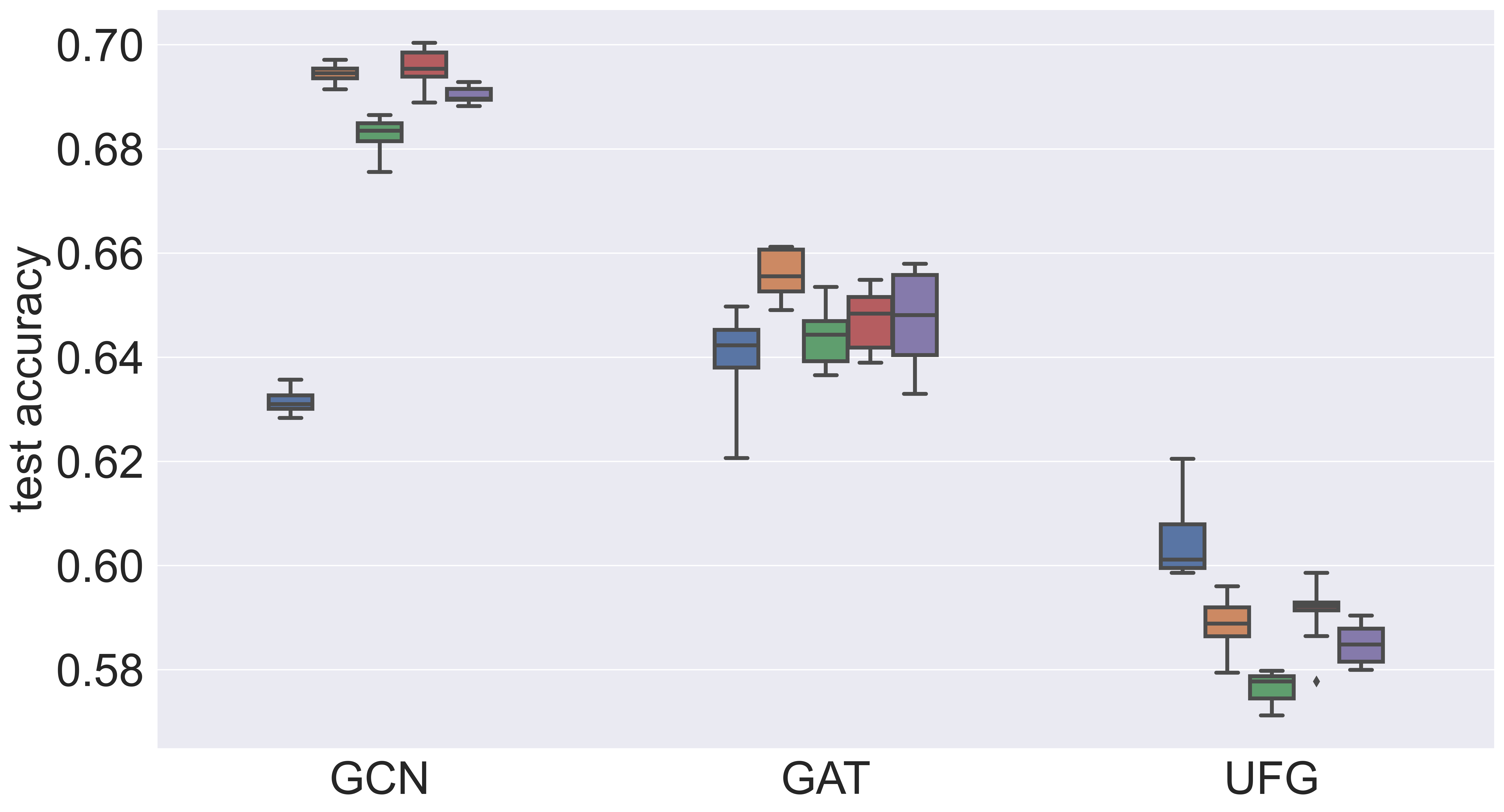}}
      \\
    \subfloat{
          \includegraphics[trim={0 3.8cm 0 3.85cm}, clip,     width=0.5\linewidth]{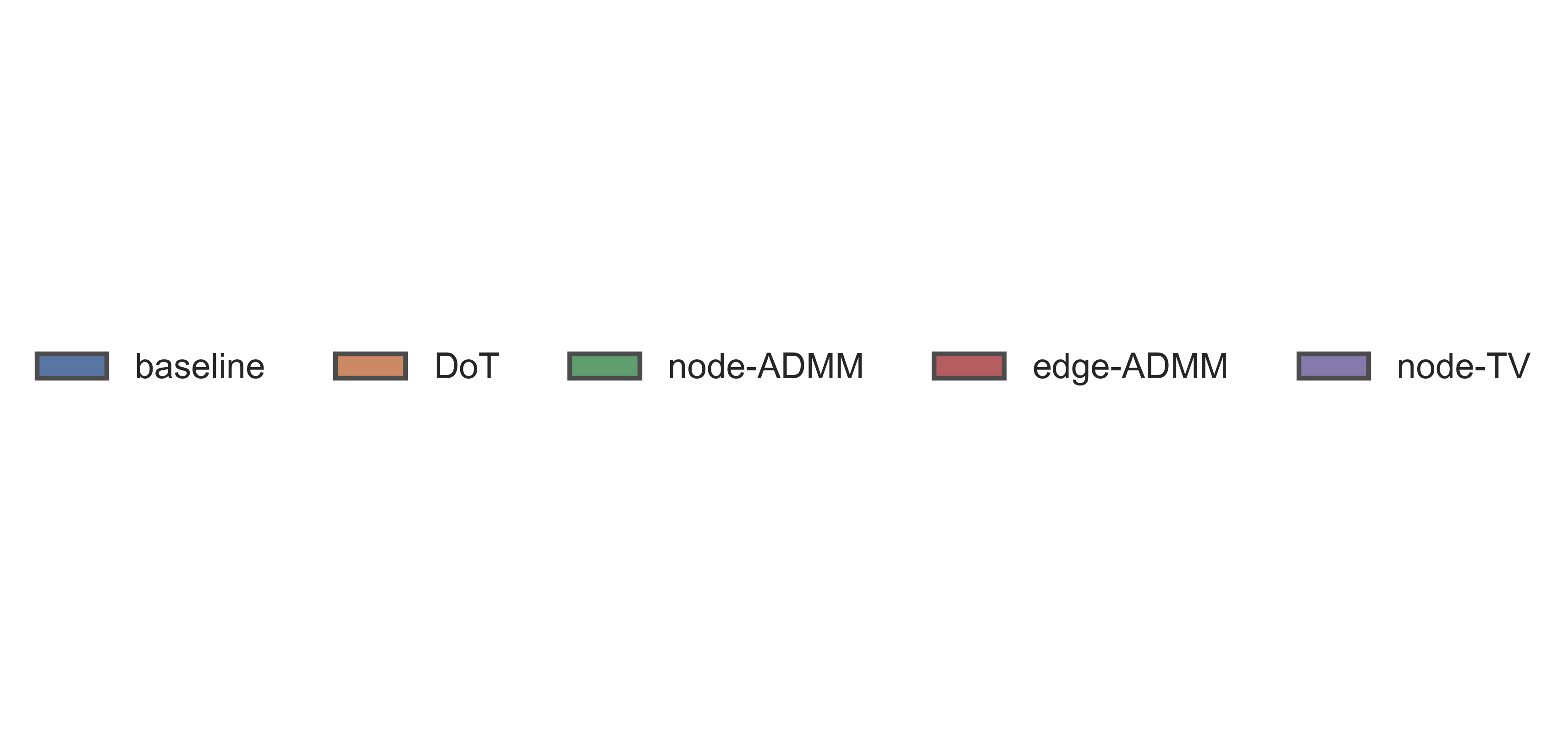}}
    \caption{Box plots of the model performance over node (row $1$) or edge (row $2$) perturbations. The plots in the three columns reports the average performance over the three datasets, \textbf{Cora} (column $1$), \textbf{Citeseer} (column $2$) and \textbf{Wiki-CS} (column $3$), respectively. }
    \label{fig:boxplot}
\end{figure*}

\subsubsection{Case 2: Individual Perturbation}
we supplement model performance under extreme feature \underline{OR} structure perturbation and investigate the main contribution of each individual denoising module. The total noise ratio is increased to a high level of $50\%$ so as to magnify the discrepancy of models. For both node and edge noises, the same distribution that was introduced previously is applied here again. For the comparison baseline models, we remove \textsc{UFG-S} with shrinkage activation, as its smoothing power over large noise has been proven in \cite{zheng2021framelets}. Appending a denoising module on top of it thus becomes less necessary.

We present the model comparison in Figure~\ref{fig:boxplot}. The results suggest exact contributors of different noise types. In the first settings of $50\%$ node noises (the first row), node-ADMM outperforms other methods significantly, following \textsc{DoT} that mainly obliterating node noises. The disparity comes from the edge-denoising module of \textsc{DoT} that is designed to minimize the norm of variance of graph topology information. When there exists little structure noise in the raw input, the extra penalty term could make a negative contribution to the overall performance in the slightest way. The edge-ADMM method barely maintains a similar performance to the baseline methods, which indicates neither of them identifies the real pains of input. The node-TV method performs surprisingly badly, though. It could be caused by the trace regularizer in the objective function design, which does not eliminate or relieve the long-existing problem of over-smoothing, like many other graph convolutional layers.

In the case of edge-denoising tasks, \textsc{DoT} and edge-ADMM perform as well as expectation. The node-ADMM and node-TV methods, although the same strategy of purifying hidden node representations has proven its effectiveness under a small level of perturbation, their performance is worse than the two methods that employ explicit edge denoising modules. This observation illustrates the importance of designing a specific edge denoising module, especially in extreme scenarios. Whenever it is difficult to justify whether the structure noise is too large to be implicitly removed, it is always suggested to include an edge module.


\section{Conclusion}
\label{sec:conclusion}
This paper introduces a robust optimization scheme for denoising graph features and structure. The graph noise level is measured by a double-term regularizer, which pushes Framelet coefficients of the node representation to be sparse and the adjacency matrix to concentrate its uncertainty on small communities with limited influence. The optimal graph representation is reached by a sequence of feed-forward propagation, where each layer functions as an optimization iteration. The update converges to the exact solution of robust graph representation, which removes irregular graph volatility but keeps expressivity. We provide comprehensive theoretical support and extensive empirical validation to prove the effectiveness of our denoising method in terms of node and/or edge contamination.

We shall emphasize that in experiments we fix the ADMM iteration up to $10$ to save the computational cost, which is lower than the typical number of $30$ to $40$. In other words, our model preserves the potential to achieve higher scores.



\ifCLASSOPTIONcaptionsoff
  \newpage
\fi



\bibliographystyle{IEEEtran}
\bibliography{reference}

%




\newpage
\appendices

\section{DoT Update Scheme in Detail}
\label{sec:app:admmUpdate}
This section derives the calculation details of the update rules for \textsc{DoT} in Section~\ref{sec:ADMMdenoising}. 

\subsection[STEP 1: U]{Update $\mU^{(t+1)}$}
We start from updating $\mU^{(t+1)}$ with $\mZ^{(t)}, \mE^{(t)}, \mQ_{k,l}^{(t)}$, $\Lambda_{k,l}^{(t)}$ and $\mu^{(t)}_{k,l}$s. We omit the constant term in \eqref{func:objective_full_lagrangian} and rewrite the minimization function to
\begin{align*}
    &\gL(\mU)=\frac{\lambda_2}2\|\mU-\mX\|^2_{2,G}  +\frac{\mu^{(t)}_1}{2}\Bigl\|\tL^{(t)}\mU -\mE^{(t)}\Bigr\|_2^2 \\
    &\qquad\qquad+\text{tr}\left(\Lambda_1^{(t)\top}(\tL^{(t)}\mU-\mE^{(t)})\right)\\
    &\sum_{k,l}\left(\frac{\mu^{(t)}_{2}}{2}\bigl\|\mQ^{(t)}_{k,l}-\gW_{k,l}\mU\bigr\|_2^2+ \text{tr}\left(\Lambda_{2;k,l}^{(t)\top}(\mQ^{(t)}_{k,l}-\gW_{k,l}\mU)\right) \right),
\end{align*}
where $\tL^{(t)}=\mI-\mY^{(t)}$. Note that $\frac12\|\mU-\mX\|^2_{2,G} = \frac12\text{tr}((\mU-\mX)^{\top}\mD (\mU-\mX))$ where $\mD = \text{diag}(d_1, ..., d_N)^{\top}$. 
We then take ${\partial \gL}/{\partial \mU}=0$, which gives 
\begin{align*}
    &\mU^{(t+1)}=
    \left(\lambda_2\mD+\mu^{(t)}_1\tL^{(t)\top}\tL^{(t)}+\mu_{2}^{(t)}\mI\right)^{-1}\\
    &\qquad\qquad\left(\lambda_2\mD\mX +\mu^{(t)}_1\tL^{(t)\top}\mE^{(t)}-\tL^{(t)\top}\Lambda_1^{(t)}\right.\\
    &\qquad\qquad\left.+\sum_{k,l}(\mu_{2}^{(t)}\gW_{k,l}^{\top}\mQ^{(t)}_{k,l}+\gW_{k,l}^{\top}\Lambda_{2;k,l}^{(t)})\right)\\
    &\hspace{-2mm}\approx\left(\mI-\mu^{(t)}_2(\lambda_2\mD+\mu^{(t)}_2\mI)^{-1}\tL^{(t)\top}\tL^{(t)}\right)\left(\lambda_2\mD+\mu^{(t)}_2\mI\right)^{-1} \\
    &\quad\left(\lambda_2\mD\mX+\mu^{(t)}_1\tL^{(t)\top}\mE^{(t)}-\tL^{(t)\top}\Lambda_1^{(t)}\right.\\
    &\quad\left.+\sum_{k,l}(\mu_{2}^{(t)}\gW_{k,l}^{\top}\mQ^{(t)}_{k,l}+\gW_{k,l}^{\top}\Lambda_{2;k,l}^{(t)})\right).
\end{align*}
The above equation contains matrix inversion, where the Maclaurin Series of first order Taylor expansion \cite{miller1981inverse} is applied for approximation. Alternatively, one could consider the linear solver \cite{Golub1996matrix} or the Cholesky Factorization \cite{Haddad2009Cholesky}. in the case when the spectral radius of the inverse matrix is larger than one and the matrix approximation fails to converge.

\subsection[STEP 2: Z]{Update $\mZ^{(t+1)}$}
For the third step, we fix $\mU^{(t+1)}$, $\mY^{(t+1)}$ and all other variables  from iteration $t$ to update $\mZ^{(t+1)}$.  The sub-problem is to minimize the following objective
\begin{align*}
    \gL(\mZ)=& \|\mZ\|_1 +\frac{\mu^{(t)}_4}2 \|\mY^{(t)} - \mZ + \text{diag}(\mZ)\|^2 \\
    &+\text{tr}\left(\Lambda^{(t)\top}_4\left(\mY^{(t)} - \mZ +\text{diag}(\mZ)\right)\right).
\end{align*}
That can be solved by the following closed-form solution
\begin{align*}
    \mZ^{(t+1)} =& \mR - \text{diag}(\mR),\\
    \text{ where }\mR :=& \mathcal{T}_{1/\mu^{(t)}_4}\left(\mY^{(t)} + \frac1{\mu^{(t)}_4}\Lambda^{(t)}_4\right).
\end{align*}
Here the $\mathcal{T}_{\eta}(\cdot)$ is the \emph{soft threshold operator} defined as follows:
\begin{align*}
    \mathcal{T}_{\eta}(x) = \text{sign}(x)\max\{|x| - \eta, 0\}
\end{align*}
that $\text{ReLU}(x-\eta) - \text{ReLU}(-x-\eta)$.

\subsection[STEP 3: E]{Update $\mE^{(t+1)}$}
The associated objective function with respect to $\mE$ reads
\begin{align*}
    \gL(\mE)=&\|\mE\|_{2,1,G}+\frac{\mu_1^{(t)}}{2}\bigl\|\tL^{(t+1)}\mU^{(t+1)}-\mE\bigr\|_2^2\\
    &+\text{tr}\left(\Lambda_1^{(t)\top}\bigl(\tL^{(t+1)}\mU^{(t+1)}-\mE\bigr)\right)\\
    =&\|\mE\|_{2,1,G}+\frac{\mu_1^{(t)}}{2}\left\|\mE-\Bigl(\tL^{(t+1)}\mU^{(t+1)}+\frac{\Lambda_1^{(t)}}{\mu_1^{(t)}}\Bigr)\right\|_2^2,
\end{align*}
where $\tL^{(t+1)}=\mI-\mY^{(t+1)}$. The solution to the $i$th row of $\mE^{(t+1)}$ is then
\begin{align*}
    \mE_i^{(t+1)}=\mathcal{T}_{1/\mu_1^{(t)}}^i\left(\tL^{(t+1)}\mU^{(t+1)}+\frac{\Lambda_1^{(t)}}{\mu_1^{(t)}}\right).
\end{align*}
Here $\mathcal{T}^i$ is a row-wise soft-thresholding for group $\sL_2$-regularization. For the $i$th row of $x$,
\begin{align*}
\mathcal{T}_{\eta}^i(x) = \frac{x_i}{\|x_i\|_2}\max\{\|x_i\|_2-\eta,0\}.
\end{align*}

\subsection[STEP 4: Y]{Update $\mY^{(t+1)}$}
We next use fixed $\mU^{(t+1)}$ and all other variables from iteration $t$ to update $\mY^{(t+1)}$. Similar to the last step, we rewrite the optimization problem by omitting the constant terms, which gives
\begin{align*}
    \gL(\mY)=&\frac{\mu^{(t)}_1}{2}\|\mU^{(t+1)}-\mY\mU^{(t+1)}-\mE^{(t+1)}\|_2^2+\frac{\mu^{(t)}_3}{2}\|\mY\vone-\vone\|_2^2\\
    &+ \frac{\mu^{(t)}_4}{2}\|\mY - \mZ^{(t+1)} +\text{diag}(\mZ^{(t+1)})\|^2_2\\
    &+ \text{tr}\left(\Lambda_1^{(t)\top}(\mU^{(t+1)}-\mY\mU^{(t+1)}-\mE^{(t)})\right)\\
    &+\Lambda_{3}^{(t)\top}(\mY\vone-\vone) + \text{tr}\left(\Lambda_4^{(t)\top}(\mY - \mZ^{(t)} +\text{diag}(\mZ^{(t)}))\right).
\end{align*}
With ${\partial \gL}/{\partial \mY}=0$, we have 
\begin{align*}
    \mY^{(t+1)}=&\left(\mu^{(t)}_1(\mU^{(t+1)} - \mE^{(t+1)})\mU^{(t+1)\top} + \mu^{(t)}_3\vone\vone^{\top}  \right. \\
    & \left.+ \mu^{(t)}_4 \mZ^{(t+1)} + \Lambda^{(t)}_1\mU^{(t+1)\top} - \Lambda^{(t)}_3\vone^{\top} -\Lambda^{(t)}_4\right)\\
    & \left(\mu^{(t)}_1\mU^{(t+1)}\mU^{(t+1)\top} + \mu^{(t)}_3\vone\vone^{\top} +\mu^{(t)}_4\mI\right)^{-1}.
\end{align*}

Calculating the above equation involves the inversion of a considerably large matrix. When $N>>d$, we suggest apply the \emph{Woodbury identity} \cite{woodbury1950inverting} to reduce the $\R^{N\times N}$ matrix to a smaller size of $\R^{(d+1)\times (d+1)}$. We rewrite
\begin{align*}
    &\mu^{(t)}_1\mU^{(t+1)}\mU^{(t+1)\top} + \mu^{(t)}_3\vone\vone^{\top} +\mu^{(t)}_4\mI& \\
    =& \widetilde{\mU}^{(t+1)}\widetilde{\mU}^{(t+1)\top} + \mu^{(t)}_4\mI,&
\end{align*}
where $\widetilde{\mU}^{(t+1)} =[\sqrt{\mu^{(t)}_1}\mU^{(t+1)}, \sqrt{\mu^{(t)}_3}\vone] \in\mathbb{R}^{N\times (d+1)}$. By the Woodbury identity, we reduce
\begin{flalign*}
    &(\widetilde{\mU}^{(t+1)}\widetilde{\mU}^{(t+1)\top} + \mu^{(t)}_4\mI)^{-1}&\\ \approx& \frac1{\mu^{(t)}_4}\mI-\frac1{\mu^{(t)}_4} \widetilde{\mU}^{(t+1)}\left[\mu^{(t)}_4 \mI + \widetilde{\mU}^{(t+1)\top} \widetilde{\mU}^{(t+1)}\right]^{-1} \widetilde{\mU}^{(t+1)\top},&
\end{flalign*}
where $\mu^{(t)}_4 \mI + \widetilde{\mU}^{(t+1)\top} \widetilde{\mU}^{(t+1)}$ is in size $(d+1)\times (d+1)$.

\subsection[STEP 3: Q]{Update $\mQ^{(t+1)}$}
We first rewrite the objective function \eqref{func:objective_full_lagrangian} as
\begin{flalign*}
    &\gL(\mQ)=\sum_{k,l}\nu_{k,l}\|\mQ_{k,l}\|_{1,G}+\frac{\mu_{2}^{(t)}}{2}\left\|\mQ_{k,l}-\gW_{k,l}\mU^{(t+1)}\right\|_2^2\\
    &\qquad\qquad+\text{tr}\left(\Lambda_{2;k,l}^{(t)\top}\left(\mQ_{k,l}-\gW_{k,l}\mU^{(t+1)}\right)\right)\\
    &=\sum_{k,l}\nu_{k,l}\|\mQ_{k,l}\|_{1,G}+\frac{\mu_{2}^{(t)}}{2}\left\|\mQ_{k,l}-\left(\gW_{k,l}\mU^{(t+1)}-\frac{\Lambda_{2;k,l}^{(t)}}{\mu_{2}^{(t)}}\right)\right\|_2^2.
\end{flalign*}
The above formulation suggests a row-wise update of $\mQ_{k,l}^{(t+1)}$. For the $i$th row,
\begin{align*}
    \mQ_{k,l}^{(t+1)}[i,:]=\mathcal{T}_{\nu_{k,l}d_i/\mu_{2}^{(t)}}\left(\gW_{k,l}\mU^{(t+1)}[i,:]-\frac1{\mu^{(t)}_{2}}\Lambda_{2;k,l}^{(t)}[i,:]\right).
\end{align*}
\begin{remark}
    A batch operation can be considered here in implementation. Denote $d$ the feature dimension of $\mX$, and $\mD$ the diagonal of graph degrees. We define $\Delta$ as a matrix consisting of $d$ repeated columns from $\mD$ so that $\Delta_{k,l} = \frac{\nu_{k,l}}{\mu^{(t)}_{2}} \Delta$. With matrix operation we have $\mQ_{k,l}^{(t+1)} = \mathcal{T}_{\Delta_{k,l}}\left(\gW_{k,l}\mU^{(t+1)}-\frac1{\mu^{(t)}_{2}}\Lambda_{2;k,l}^{(t)}\right)$.
\end{remark}

\subsection[STEP 5: lambda]{Update $\Lambda^{(t+1)}$s}
We now update the Lagrangian multipliers with respect to the three constraints. Fix variables $\mU^{(t+1)}$, $\mQ_{k,l}^{(t+1)}, \mZ^{(t+1)}, \mE^{(t+1)}$ and the parameters $\mu^{(t)}$s, we update the multiplier $\Lambda_{k,l}^{(t+1)}$s by
\begin{align*}
    \Lambda_{1}^{(t+1)}&=\Lambda_{1}^{(t)}+\mu_{1}^{(t)}(\mU^{(t+1)} - \mY^{(t+1)}\mU^{(t+1)}-\mE^{(t+1)})\\
    \Lambda_{2;k,l}^{(t+1)}&=\Lambda_{2;k,l}^{(t)}+\mu_{2}^{(t)}(\mQ_{k,l}^{(t+1)}-\gW_{k,l}\mU^{(t+1)})\\
    \Lambda_{3}^{(t+1)}&=\Lambda_{3}^{(t)}+\mu_{3}^{(t)}(\mY^{(t+1)} \vone-\vone)
    \\
    \Lambda_{4}^{(t+1)}&=\Lambda_{4}^{(t)}+\mu_{4}^{(t)}(\mY^{(t+1)} - \mZ^{(t+1)} + \text{diag}(\mZ^{(t+1)}))
\end{align*}
As $\Lambda$s preserve the integral of all residuals with respect to the constrain along the update progress, ADMM feeds back the difference stepwisely to drive a zero error on the constrain.

\subsection[STEP 6: mu]{Update $\vmu^{(t+1)}$s}
Finally, we consider adaptive penalty parameters $\mu^{(t+1)}$s to lift the burden of parameter tuning. Such updating rule has been proven to have a fast convergence speed, see \cite{lin2011linearized}. For $i=1,2,3,4$, we define
\begin{align}
\begin{aligned}
    \mu_{i}^{(t+1)}&=\min\left(\rho\mu_{i}^{(t)}, \mu_{i,\max}\right).
 \end{aligned}\label{func:updateMu}
\end{align}
Here $\mu_{i,\max}$, for $i=1,2,3,4$, is an upper bound over all the adpative penalty parameters, and the value of $\rho\, (\geq 1)$ is fixed to an appropriate constant. 
\begin{remark}
    The $\rho$ can also be defined as a piece-wise function where the split condition comes from stopping criteria analysis, for instance, see \cite{xie2019differentiable}.
\end{remark}

\section{Proof of Theorem~\ref{theorem:convergence}}
\label{sec:app:convergenceProof}
 
Step 1: The proof of claim 1)

Consider the iteration scheme for $\mZ^{(t+1)}$. On the $(t+1)$th iteration, according to the updating rule of $\mE^{(t+1)}$, its first-order optimality condition holds, i.e., 
\[
\mathbf 0 \in \partial \|\mZ^{(t+1)}\|_1 - \mu^{(t)}_4 (\mY^{(t+1)} - \mZ^{(t+1)} +\text{diag}(\mZ^{(t+1)})) - \Lambda^{(t)}_4
\]
According to the second last rule in \eqref{func:objective_1_mu_lambda}, we immediately see that 
\begin{align}
 \Lambda^{(t+1)}_4 \in \partial \|\mZ^{(t+1)}\|_1. \label{eq:Gao1}   
\end{align} 
According to the fact, when $y\in\partial \|x\|$ where $\|\cdot\|$ is a norm, we have $\|y\|_{\text{dual}}\leq 1$. Specifically $\|y\|_{\text{dual}} = 1$ if $x\not=0$, otherwise $\|y\|_{\text{dual}} \leq 1$. As the dual norm of the $\ell_1$ norm is $\ell_{\infty}$, hence $\Lambda^{(t+1)}$ is bounded.  

Now consider each  $\mQ_{k,l}$. The first order condition of the update rule \eqref{func:objective_1_q} is given by
\[
\mathbf{0} \in \nu_{k,l}\partial \|\mQ^{(t+1)}_{k,l}\|_{1,G} + \Lambda_{2;k,l} + \mu^{(t+1)}_2(\mQ_{k,l} -\mathcal{W}_{k,l} \mU^{(t+1)}).
\]
This surely gives, as our argument above,
\begin{align}
   \Lambda^{(t+1)}_{2;k,l} \in \nu_{k,l}\partial \|\mQ^{(t+1)}_{k,l}\|_{1,G},\label{eq:Gao2} 
\end{align} 
hence all $\Lambda^{(t+1)}_{2;k,l}$ are bounded.

By the similar argument as the first step, we can claim that 
\begin{align}
 \Lambda^{(t+1)}_1 \in \partial_{\mE} \|\mE^{(t+1)}\|_{2,1,G}. \label{eq:Gao3}   
\end{align} 
Similarly we can conclude that $\Lambda^{(t+1)}_{1}$ is bounded too.

Applying the similar strategy for $\ell_{2,1}$ norm, we can conclude that $\Lambda^{(t+1)}_1$ is also bounded.

Regarding the boundedness of $\Lambda^{(t)}_3$, we note that the first order condition for the update step 1 for $\mY^{(t+1)}$ gives 
\begin{align*}
&-\mu^{(t)}_1(\mU^{(t+1)} - \mY^{(t+1)} \mU^{(t+1)} - \mE^{(t+1)})\mU^{(t+1)\top} \\
&+\mu^{(t)}_3 (\mY^{(t+1)}\mathbf 1 -\mathbf 1)\mathbf 1^\top + \mu^{(t)}_4(\mY^{(t+1)} - \mZ^{(t+1)} +\text{diag}(\mZ^{(t+1)})\\
&- \Lambda^{(t)}_1\mU^{(t+1)\top} + \Lambda^{(t)}_3\mathbf 1^\top + \Lambda^{(t)}_4 = 0,
\end{align*}
which is, based on the Lagrangian updating rules \eqref{func:objective_1_mu_lambda},
\begin{align*}
    &-(\Lambda^{(t+1)}_1 - \Lambda^{(t)}_1)\mU^{(t+1)\top} + (\Lambda^{(t+1)}_3 - \Lambda^{(t)}_3)\vone^\top \\
    &+ (\Lambda^{(t+1)}_4 - \Lambda^{(t)}_4) - \Lambda^{(t+1)}_4\mU^{(t+1)} + \Lambda^{(t)}_3\vone^\top + \Lambda^{(t)}_4 = 0.
\end{align*}
Hence
\begin{align}
 \Lambda^{(t+1)}_3 \vone^\top = -\Lambda^{(t+1)}_1\mU^{(t+1)} +  \Lambda^{(t+1)}_4 \label{eq:Gao4}
\end{align}
which shows $\Lambda^{(t+1)}_3$ is bounded.


Finally the monotonic property of ADMM gives  
\begin{align*}
&\gL(\mU^{(t+1)},\mZ^{(t+1)},\mE^{(t+1)},\mQ^{(t+1)}, \mY^{(t+1)};\Lambda^{(t)}_1,\Lambda^{(t)}_2,\\
&\;\;\;\;\;\;\Lambda^{(t)}_3,\Lambda^{(t)}_4,\mu^{(t)}_1,\mu^{(t)}_2,\mu^{(t)}_3, \mu^{(t)}_4) \\
\leq & \gL(\mU^{(t)},\mZ^{(t)},\mE^{(t)},\mQ^{(t)},\mY^{(t)};\Lambda^{(t)}_1,\Lambda^{(t)}_2,\Lambda^{(t)}_3,\Lambda^{(t)}_4,\\
&\;\;\;\;\;\;\mu^{(t)}_1,\mu^{(t)}_2,\mu^{(t)}_3, \mu^{(t)}_4). 
\end{align*}
It is easy to re-write that
\begin{align*}
 & \gL(\mU^{(t)},\mZ^{(t)},\mE^{(t)},\mQ^{(t)},\mY^{(t)};\Lambda^{(t)}_1,\Lambda^{(t)}_2,\Lambda^{(t)}_3,\Lambda^{(t)}_4,\\
 & \;\;\;\;\;\;\;\;\mu^{(t)}_1,\mu^{(t)}_2,\mu^{(t)}_3, \mu^{(t)}_4) \\
 =&  \gL(\mU^{(t)},\mZ^{(t)},\mE^{(t)},\mQ^{(t)},\mY^{(t)};\Lambda^{(t-1)}_1,\Lambda^{(t-1)}_2,\Lambda^{(t-1)}_3,\\
 &\;\;\;\;\Lambda^{(t-1)}_4,\mu^{(t-1)}_1,\mu^{(t-1)}_2,\mu^{(t-1)}_3, \mu^{(t-1)}_4)\\
 &+\frac{\mu^{(t)}_1 - \mu^{(t-1)}_1}{2}\|\mU^{(t)}-\mY^{(t)}\mU^{(t)} -\mE^{(t)}\|_2^2 \\
 &+\sum_{k,l}\frac{\mu^{(t)}_{2} - \mu^{(t-1)}_2}{2}\|\mQ^{(t)}_{k,l}-\gW_{k,l}\mU^{(t)}\|_2^2 \notag \\
    &+\frac{\mu^{(t)}_3 -\mu^{(t-1)}_3}{2}\|\mY^{(t)} \vone-\vone\|_2^2 \\
    &+ \frac{\mu^{(t)}_4-\mu^{(t-1)}_4}2\|\mY^{(t)}-\mZ^{(t)}+\text{diag}(\mZ^{(t)})\|^2_2 \notag \\ 
    & +\text{tr}\left((\Lambda_1^{(t)\top}-\Lambda_1^{(t-1)\top})(\mU^{(t)}-\mY^{(t)}\mU^{(t)}-\mE^{(t)})\right)\\
    &+\sum_{k,l}\text{tr}\left((\Lambda_{2;k,l}^{(t)\top}-\Lambda_{2;k,l}^{(t-1)\top})(\mQ_{k,l}-\gW_{k,l}\mU^{(t)})\right) \notag \\ 
    &+(\Lambda_{3}^{(t)\top}-\Lambda_{3}^{(t-1)\top})(\mY^{(t)} \vone-\vone)\\ &+\text{tr}((\Lambda^{(t)\top}_4-\Lambda^{(t-1)\top}_4)(\mY^{(t)} - \mZ^{(t)} + \text{diag}(\mZ^{(t)})))\\
=& \text{RHS}
\end{align*}
With the updating rules in Step 6 of the algorithm, we have
\begin{align*}
    \text{RHS}&=\gL(\mU^{(t)},\mZ^{(t)},\mE^{(t)},\mQ^{(t)},\mY^{(t)};\Lambda^{(t-1)}_1,\Lambda^{(t-1)}_2,\\ 
    &\;\;\;\;\;\Lambda^{(t-1)}_3,\Lambda^{(t-1)}_4,\mu^{(t-1)}_1,\mu^{(t-1)}_2,\mu^{(t-1)}_3, \mu^{(t-1)}_4)\\
    & +\frac{\mu^{(t)}_1+\mu^{(t-1)}_1}{2\mu^{2(t-1)}_1}\|\Lambda_1^{(t)}-\Lambda_1^{(t-1)}\|^2_F  \\
    &+\sum_{k,l} \frac{\mu^{(t)}_2+\mu^{(t-1)}_2}{2\mu^{2(t-1)}_2}\|\Lambda_{2;k,l}^{(t)}-\Lambda_{2;k,l}^{(t-1)}\|^2_F   \notag \\ 
    &+ \frac{\mu^{(t)}_3+\mu^{(t-1)}_3}{2\mu^{2(t-1)}_3}\|\Lambda_{3}^{(t)}-\Lambda_{3}^{(t-1)}\|^2_F\\ &+ \frac{\mu^{(t)}_4+\mu^{(t-1)}_4}{2\mu^{2(t-1)}_4}\|\Lambda^{(t)}_4-\Lambda^{(t-1)}_4\|^2_F.
\end{align*}
Summing the above result over $t= 0,1, ..., T (\geq 1)$ gives
\begin{align*}
 & \gL(\mU^{(t)},\mZ^{(t)},\mE^{(t)},\mQ^{(t)},\mY^{(t)};\Lambda^{(t)}_1,\Lambda^{(t)}_2,\Lambda^{(t)}_3,\Lambda^{(t)}_4,\\
 &\;\;\;\;\;\;\;\mu^{(t)}_1,\mu^{(t)}_2,\mu^{(t)}_3, \mu^{(t)}_4) \\
 \leq & \gL(\mU^{(1)},\mZ^{(1)},\mE^{(1)},\mQ^{(1)}, \mY^{(1)};\Lambda^{(0)}_1,\Lambda^{(0)}_2,\Lambda^{(0)}_3,\Lambda^{(0)}_4,\\
 &\;\;\;\;\mu^{(0)}_1,\mu^{(0)}_2,\mu^{(0)}_3, \mu^{(0)}_4) \\
 &+\sum^T_{t=1}\frac{\mu^{(t)}_1+\mu^{(t-1)}_1}{2\mu^{2(t-1)}_1}\|\Lambda_1^{(t)}-\Lambda_1^{(t-1)}\|^2_F  \\
    &+\sum^T_{t=1}\sum_{k,l} \frac{\mu^{(t)}_2+\mu^{(t-1)}_2}{2\mu^{2(t-1)}_2}\|\Lambda_{2;k,l}^{(t)}-\Lambda_{2;k,l}^{(t-1)}\|^2_F   \notag \\ 
    &+\sum^T_{t=1} \frac{\mu^{(t)}_3+\mu^{2(t-1)}_3}{2\mu^{2(t-1)}_3}\|\Lambda_{3}^{(t)}-\Lambda_{3}^{(t-1)}\|^2_F\\ &+ \sum^T_{t=1}\frac{\mu^{(t)}_4+\mu^{2(t-1)}_4}{2\mu^{2(t-1)}_4}\|\Lambda^{(t)}_4-\Lambda^{(t-1)}_4\|^2_F.
\end{align*}
Note that, for $i=1, 2, 3, 4$
\begin{align*}
&\sum^T_{t=1}\frac{\mu^{(t)}_i+\mu^{(t-1)}_i}{2\mu^{2(t-1)}_i} =    \sum^T_{t=1}\frac{\rho \mu^{(t-1)}_i+\mu^{(t-1)}_i}{2\mu^{2(t-1)}_i} \\
=&\frac{\rho+1}2\sum^T_{t=1}\frac1{\mu^{(t-1)}_i} = \frac{\rho+1}{2\mu^{(0)}_i}\sum^T_{t=1}\frac1{\rho^{t-1}}\leq \frac{\rho(\rho+1)}{2\mu^{(0)}_i (\rho-1)}.
\end{align*}

Note that $\gL(\mU^{(1)},\mZ^{(1)},\mE^{(1)},\mQ^{(1)},\mY^{(1)};\Lambda^{(0)}_1,\Lambda^{(0)}_2,\Lambda^{(0)}_3,$ $\Lambda^{(0)}_4,\mu^{(0)}_1,\mu^{(0)}_2,\mu^{(0)}_3, \mu^{(0)}_4)$ is finite, and squences $\{\Lambda^{(t)}_1, \Lambda^{(t)}_2, \Lambda^{(t)}_3, \Lambda^{(t)}_4\}$ and $\sum^T_{t=1}\frac{\mu^{(t)}_i+\mu^{(t-1)}_i}{2\mu^{2(t-1)}_i}$, so the objective is bounded. Now Therefore, we can conclude that  all the relevant sequences $\{\Gamma_t\}$ are unbounded given the finite objective function values. This completes the proof of claim 1) of Theorem 1.

\noindent Step 2: The proof of claim 2)

According to Bolzano-Weierstrass  theorem, it follows from the boundedness of the sequence $\{\Gamma_t\}_{t=1}^{\infty}$ that $\{\Gamma_t\}_{t=1}^{\infty}$ must have a convergent subsequence. Without loss of generality, suppose that the subsequence of $\{\Gamma_t\}_{t=1}^{\infty}$ is represented by itself, and it converges to an accumulation point, denoted as $\Gamma_*$. That is, we have established 
\begin{align}
    &\lim\limits_{t\rightarrow \infty} (\mU^{(t)}, \mZ^{(t)},  \mE^{(t)}, \mQ^{(t)}, \mY^{(t)}, \Lambda^{(t)}_1,\Lambda^{(t)}_2, \Lambda^{(t)}_3,\Lambda^{(t)}_4) \notag \\
=& (\mU_*, \mZ_*,  \mE_*, \mQ_*, \mY_*, \Lambda_{1*},\Lambda_{2*}, \Lambda_{3*},\Lambda_{4*}).\label{eq:conv}
\end{align}

The updating rules of Lagrange multipliers \eqref{func:objective_edge_full} imply that
 \begin{align*} 
&\mU^{(t+1)} - \mZ^{(t+1)}\mU^{(t+1)}-\mE^{(t+1)} =  (\Lambda_{1}^{(t+1)} - \Lambda_{1}^{(t)})/ \mu_{1}^{(t)}; \notag \\
&\mQ_{k,l}^{(t+1)}-  \gW_{k,l}\mU^{(t+1)}=(\Lambda_{2;k,l}^{(t+1)}-\Lambda_{2;k,l}^{(t)})/\mu_{2}^{(t)}; \\
& \mY^{(t+1)} \vone-\vone =       (\Lambda_{3}^{(t+1)}-\Lambda_{3}^{(t)})/\mu_{3}^{(t)}; \notag  \\
&\mY^{(t+1)} - \mZ^{(t+1)} + \text{diag}(\mZ^{(t+1)}) =         (\Lambda_{4}^{(t+1)}-\Lambda_{4}^{(t)})/\mu_{4}^{(t)};
\end{align*}
By the boundedness of the sequences $\{ \Lambda^{(t)}_1,\Lambda^{(t)}_2, \Lambda^{(t)}_3,\Lambda^{(t)}_4\}$, and the fact $\lim\limits_{t \rightarrow \infty } \mu^{(t)}_i = \infty$, we further have, by taking limit,
\begin{align*}
    \mU_* &= \mY_* \mU_* + \mE_*; \;\mathcal{Q}_* = \mathcal{W}_*\mU_*\\
    \mY_*\mathbf{1} & = \mathbf 1; \;\mY_* = \mZ_* - \text{diag}(\mZ_*)
    \end{align*}
The other four KKT conditions can be obtained by taking limits over \eqref{eq:Gao1} - \eqref{eq:Gao4}.  The proof of claim 2) is completed.



\noindent Step 3: The proof of claim 3)

We conduct the proof in the following sub-steps.

\noindent(1) $\{\mY^{(t)}\}$ is a Cauchy:

According to the updating rules \eqref{func:objective_1_mu_lambda}, we have
\begin{align*}
&\Lambda_{3}^{(t+1)}=\Lambda_{3}^{(t)}+\mu_{3}^{(t)}(\mY^{(t+1)} \vone-\vone)\\
&\Lambda_{3}^{(t)}=\Lambda_{3}^{(t-1)}+\mu_{3}^{(t-1)}(\mY^{(t)} \vone-\vone)
\end{align*}

This gives
\begin{align*}
\mY^{(t+1)} \vone-\mY^{(t)} \vone = \frac{\Lambda^{(t+1)}_3-\Lambda^{(t)}_3}{\mu^{(t)}_3}  +  \frac{\Lambda^{(t)}_3-\Lambda^{(t-1)}_3}{\mu^{(t-1)}_3}   
\end{align*}

Hence
\begin{align*}
&\|\mY^{(t+1)}-\mY^{(t)}\|^2_F \leq \|(\mY^{(t+1)}-\mY^{(t)})\vone\|^2_F\\
\leq &\frac1{\mu^{2(t)}_3}\|\Lambda^{(t+1)}_3-\Lambda^{(t)}_3\|^2_F + \frac1{\mu^{2(t-1)}_3}\|\Lambda^{(t)}_3-\Lambda^{(t-1)}_3\|^2_F\\
 = &\frac1{\mu^{2(t)}_3}(\|\Lambda^{(t+1)}_3-\Lambda^{(t)}_3\|^2_F + \rho^2 \|\Lambda^{(t)}_3-\Lambda^{(t-1)}_3\|^2_F)\\
=:&\frac1{\mu^{2(t)}_3}C_t
\end{align*}
where $C_t$ is bounded as the sequence $\{\Gamma_t\}$ is bounded.  Then  for any $m > n$, we can establish
\begin{align*}
\|\mY^{(m)} -    \mY^{(n)}\|^2_F &\leq \sum^{m-1}_{t=n}\| \mY^{(t+1)} -    \mY^{(t)}\|^2_F\\
&\leq C \sum^{m-1}_{t=n}\frac1{\mu^{2(t)}_1} = C \sum^{m-1}_{t=n}\frac1{\mu^{(0)}_i \rho^{2t}}
\end{align*}
where $\rho>1$. This completes the proof of Cauchy property of the sequence $\{\mY^{(t)}\}$.

\noindent(2) $\{\mU^{(t)}\}$ is a Cauchy:

The first order condition for the optimal $\mU^{(t+1)}$ reads as, denoting by $\mL^{(t+1)} = \mI - \mY^{(t+1)}$,
\begin{align*}
0 =& \lambda_2\mD(\mU^{(t+1)}-\mX)+\mu^{(t)}_1\mL^{(t+1)\top}(\mL^{(t+1)}\mU^{(t+1)} -\mE^{(t)}) \\
&+\sum_{k,l} \mu^{(t)}_2\mathcal{W}_{k,l}^\top(\mQ^{(t)}_{k,l}-\mathcal{W}_{k,l}\mU^{(t+1)}) \\
&+ \mL^{(t+1)\top}\Lambda^{(t)}_1 +\sum_{k,l}\mathcal{W}_{k,l}\Lambda^{(t)}_{2;k,l}
\end{align*}
Similarly by using the updating rules \eqref{func:objective_1_mu_lambda} and replacing $\mE^{(t)}$ and $\mQ^{(t)}_{k,l}$, we will have
\begin{align*}
0=&\lambda_2\mD(\mU^{(t+1)}-\mX) + \mu^{(t)}_1 \mL^{(t+1)}\mL^{(t+1)\top}(\mU^{(t+1)} - \mU^{(t)})\\
&+\sum_{k,l}\mu^{(t)}_2\mathcal{W}^T_{k,l}\mathcal{W}_{k,l} (\mU^{(t+1)} - \mU^{(t)})\\
-&\rho\mL^{(t+1)}(\Lambda^{(t)}_1 - \Lambda^{(t-1)}_1)-\rho\sum_{k,l}\mathcal{W}^\top_{k,l} (\Lambda^{(t)}_2 - \Lambda^{(t-1)}_2)\\
&+\mu^{(t)}_1 \mL^{(t+1)\top}(\mY^{(t+1)}-\mY^{(t)})\mU^{(t)} + \mL^{(t+1)T}\Lambda^{(t)}_1 \\
&+\sum_{k,l}\mathcal{W}^\top_{k,l}\Lambda^{(t)}_{2;k,l} 
\end{align*}
Assuming $\mu^{(t)}_1 = \mu^{(t)}_2 = \mu^{(t)}$ will give
\begin{align*}
&\|\mU^{(t+1)} -   \mU^{(t)}\|^2_F\\
\leq& \frac1{\mu^{2(t)}}\|(\mL^{(t+1)}\mL^{(t+1)\top} + \sum_{k,l}\mathcal{W}^\top_{k,l}\mathcal{W}_{k,l} )^{-1}\|^2_F  \cdot\\
&\biggl\{\lambda_2 \|\mD(\mU^{(t+1)}-\mX)\|^2_F +\rho^2 \|\mL^{(t+1)}(\Lambda^{(t)}_1 - \Lambda^{(t-1)}_1)\|^2_F\biggr.\\
\\ 
& +  \rho^2\sum_{k,l}\|\mathcal{W}^\top_{k,l} (\Lambda^{(t)}_2 - \Lambda^{(t-1)}_2)\|^2_F +  \|\mL^{(t+1)\top}\Lambda^{(t)}_1\|^2_F\\
&
+ \mu^{(t)}_1 \|\mL^{(t+1)\top}\|^2_F\|(\mY^{(t+1)}-\mY^{(t)})\mU^{(t)}\|^2_F\\
&\biggl.+\sum_{k,l}\|\mathcal{W}^\top_{k,l}\Lambda^{(t)}_{2;k,l}\|^2_F  \biggr\}.
\end{align*}
We consider the second last term where we have seen the term $\mu^{(t)}_1$. First we note that $\|\mL^{(t+1)\top}\|^2_F \leq \|(\mI - \mY^{(t+1)})\vone\|_F$ while $(\mI - \mY^{(t+1)})\vone = (\Lambda^{(t)}_3 - \Lambda^{(t+1)}_3)/\mu^{(t)}_3$. Hence
\[
\mu^{(t)}_1 \|\mL^{(t+1)\top}\|^2_F\leq \frac{\mu^{(t)}_1}{\mu^{(t)}_3}\|\Lambda^{(t)}_3 - \Lambda^{(t+1)}_3\|^2_F = \|\Lambda^{(t)}_3 - \Lambda^{(t+1)}_3\|^2_F
\]
if we take $\mu^{(t)}_1 = \mu^{(t)}_3$. This means there exists a constant $C$ such that
\[
\|\mU^{(t+1)} -   \mU^{(t)}\|^2_F \leq \frac1{\mu^{2(t)}}C.
\]
Similar argument as for Cauchy property for $\{\mY^{(t)}\}$, we can claim $\{\mU^{(t)}\}$ is Cauchy.

\noindent(3) $\{\mE^{(t)}\}, \{\mZ^{(t)}\}, \{\mQ^{(t)}_{k,l}\}$ all are Cauchy:   

As these variables share similar pattern with L1-norm constraint, to save the space, we only take $\{\mE^{(t)}\}$ as an example to show its Cauchy property. We will follow the updating rules  \eqref{func:objective_1_mu_lambda} again.
\begin{align*}
& \|\mE^{(t+1)} - \mE^{(t)}\|^2_F\\
=&\|\mE^{(t+1)}-(\mL^{(t+1)}\mU^{(t+1)} +\Lambda^{(t)}_1/{\mu^{(t)}_1}) \\
&+(\mL^{(t+1)}\mU^{(t+1)} +\Lambda^{(t)}_1/{\mu^{(t)}_1}) -  \mE^{(t)}\|^2_F\\
=&\|\mE^{(t+1)}-(\mL^{(t+1)}\mU^{(t+1)} +\Lambda^{(t)}_1/{\mu^{(t)}_1})\\
&+(\mL^{(t+1)}\mU^{(t+1)} +\Lambda^{(t)}_1/{\mu^{(t)}_1})\\
& + (\Lambda^{(t)}_1 - \Lambda^{(t-1)}_1)/{\mu^{(t-1)}} - \mL^{(t)}\mU^{(t)}\|^2_F
\end{align*}
\begin{align*}
=&\|\mE^{(t+1)}-(\mL^{(t+1)}\mU^{(t+1)} +\frac1{\mu^{(t)}_1}\Lambda^{(t)}_1)- \mU^{(t)} + \mL^{(t)}\mU^{(t)} \\
+&\mU^{(t+1)} - \mY^{(t+1)}\mU^{(t+1)} +\frac1{\mu^{(t)}_1}(\Lambda^{(t)}_1 - \rho(\Lambda^{(t-1)}_1 - \Lambda^{(t)}_1))\|^2_F\\
\leq & \|\mE^{(t+1)}-(\mL^{(t+1)}\mU^{(t+1)} +\frac1{\mu^{(t)}_1}\Lambda^{(t)}_1)\|^2_F \\
&+\|\mY^{(t+1)}\|^2_F\|\mU^{(t+1)}-\mU^{(t)}\|^2_F\\
& +\|\mU^{(t)}\|^2_F\|\mY^{(t+1)}-\mY^{(t)}\|^2_F \\
&+\frac1{\mu^{2(t)}}\|\Lambda^{(t)}_1 - \rho(\Lambda^{(t-1)}_1 - \Lambda^{(t)}_1)\|^2_F.
\end{align*}
Now we prove the first term is second order controlled. $\mE^{(t+1)}$ is obtained by $\ell_1$-threshold operator, so each row element of the first term is less than $1/\mu^{(t)}_1$, hence
\[
\|\mE^{(t+1)}-(\mL^{(t+1)}\mU^{(t+1)} +\frac1{\mu^{(t)}_1}\Lambda^{(t)}_1)\|^2_F\leq \frac{N}{\mu^{2(t)}_1},
\]
where $N$ is the number of rows of $\mE^{(t+1)}$.  We have already proved that $\{\mY^{(t)}\}$ and $\{\mU^{(t)}\}$. With the boundedness of those variable we can claim that $\{\mE^{(t)}\}$ is Cauchy.

Final conclusion: Given the Cauchy property, we can claim that the sequence $\{\mU^{(t)}, \mZ^{(t)}, \mE^{(t)}, \mY^{(t)},\mQ^{(t)}\}$ are convergent to its critical point.

This completes the proof of Theorem 3.

\end{document}